\documentclass[11pt]{article}
\usepackage{amsmath,amssymb,amsthm}
\usepackage{latexsym}
\usepackage[dvipdfmx]{graphicx}
\usepackage{algorithm}
\usepackage{algorithmic} 
\usepackage{authblk}
\usepackage{color}
\usepackage{relsize}
\usepackage{url}
\usepackage[colorlinks,linkcolor=blue,anchorcolor=green,citecolor=green]{hyperref}
\usepackage{caption} 
\captionsetup[table]{skip=6pt}
\usepackage{tikz}
\usepackage{yhmath}
\renewcommand{\arraystretch}{1.3}
 
\setlength{\oddsidemargin}{0mm}
\setlength{\evensidemargin}{0mm}
\setlength{\textwidth}{160mm}
\setlength{\textheight}{230mm}
\topmargin -.7in
\footskip 3em
\setlength{\parskip}{\medskipamount}

\newtheorem{thm}{Theorem}

\newtheorem{definition}{Definition}

\newtheorem{example}{Example}
\newtheorem{assumption}{Assumption}
\usepackage{color}

\newcommand{\neff}{n_{\mathsmaller{\textit{eff}}}}

\newcommand{\eins}{\boldsymbol{1}}
\newcommand{\argmin}{\operatornamewithlimits{arg \, min}}
\DeclareMathOperator{\sign}{sign}

\newcommand\firstcircle{(6,-4) circle (1.5cm)}
\newcommand\firstellipse{(4.5,-4) ellipse (5cm and 2cm)}
\newcommand\secondellipse{(7.5,-4cm) ellipse (5cm and 2cm)}
\newcommand\thirdellipse{(6,-4cm) ellipse (3.6cm and 1.8cm)}
\newcommand\fourthellipse{(6,-4cm) ellipse (6cm and 2.1cm)}

\title{Learning theory estimates with observations from general stationary stochastic processes}

\author[1]{Hanyuan Hang}
\author[1]{Yunlong Feng}
\author[2]{Ingo Steinwart}
\author[1]{Johan A.K. Suykens}

\affil[1]{Department of Electrical Engineering, KU Leuven,  Leuven, Belgium}
\affil[2]{Institute for Stochastics and Applications, University of Stuttgart, Stuttgart, Germany}

\date{}

\begin{document}
\maketitle

\begin{abstract} 
This paper investigates the supervised learning problem with observations drawn from certain general stationary stochastic processes. Here by \emph{general}, we mean that many stationary stochastic processes can be included. We show that when the stochastic processes satisfy a generalized Bernstein-type inequality, a unified treatment on analyzing the learning schemes with various mixing processes can be conducted and a sharp oracle inequality for generic regularized empirical risk minimization schemes can be established. The obtained oracle inequality is then applied to derive convergence rates for several learning schemes such as empirical risk minimization (ERM), least squares support vector machines (LS-SVMs) using given generic kernels, and SVMs using Gaussian kernels for both least squares and quantile regression. It turns out that for i.i.d.~processes, our learning rates for ERM recover the optimal rates. On the other hand, for non-i.i.d.~processes including geometrically $\alpha$-mixing Markov processes, geometrically $\alpha$-mixing processes with restricted decay, $\phi$-mixing processes, and (time-reversed) geometrically $\mathcal{C}$-mixing processes, our learning rates for SVMs with Gaussian kernels match, up to some arbitrarily small extra term in the exponent, the optimal rates. For the remaining cases, our rates are at least close to the optimal rates. As a by-product, the assumed generalized Bernstein-type inequality also provides an interpretation of the so-called ``effective number of observations" for various mixing processes.  
\end{abstract}

\section{Introduction and Motivation	} 

In this paper, we study the supervised learning problem which aims at inferring a functional relation between explanatory variables and response variables \cite{Vapnik98a}. In the literature of statistical learning theory, one of the main research topics is the generalization ability of different learning schemes which indicate their learnabilities on future observations. Nowadays, it has been well understood that the Bernstein-type inequalities play an important role in deriving fast learning rates. For example, the analysis of various algorithms from non-parametric statistics and machine learning crucially depends on these inequalities, see  e.g.~\cite{DeGyLu96,DeLu01,GyKoKrWa02,StCh08a}. Here, stronger results can typically be achieved since the Bernstein-type inequality allows for localization due to its specific dependence on the variance. In particular, most derivations of minimax optimal learning rates are based on it.

The classical Bernstein inequality assumes that the data are generated by an i.i.d.~process. Unfortunately, however, this assumption is often violated in many real-world applications including financial prediction, signal processing, system identification and diagnosis, text and speech recognition, and time series forecasting, among others. For this and other reasons, there has been some effort to establish Bernstein-type inequalities for non-i.i.d.~processes. For instance, generalizations of Bernstein-type inequalities to the cases of $\alpha$-mixing \cite{Rosenblatt56a} and $\mathcal{C}$-mixing \cite{HaSt15a} processes have been found  \cite{Bosq93a,MoMa96a,MePeRi09a,Samson00a} and \cite{HaSt15a, hanglearning}, respectively. These Bernstein-type inequalities have been applied to derive various convergence rates. For example, the Bernstein-type inequality established in \cite{Bosq93a} was employed in \cite{Zhang04a} to derive convergence rates for sieve estimates from strictly stationary $\alpha$-mixing processes in the special case of neural networks.  \cite{HaSt14a} applied the Bernstein-type inequality in \cite{MoMa96a} to derive an oracle inequality (see Page $220$ in \cite{StCh08a} for the meaning of the \textit{oracle inequality}) for generic regularized empirical risk minimization algorithms with stationary $\alpha$-mixing processes. By applying the Bernstein-type inequality in \cite{MePeRi09a}, \cite{Belomestny11a} derived almost sure uniform  convergence rates for the estimated L\'{e}vy density both in mixed-frequency and low-frequency setups and proved their optimality in the minimax sense. Particularly, concerning the least squares loss, \cite{AlWi12a} obtained the optimal learning rates for $\phi$-mixing processes by applying the Bernstein-type inequality established in \cite{Samson00a}. By developing a Bernstein-type inequality for $\mathcal{C}$-mixing processes that include $\phi$-mixing processes and many discrete-time dynamical systems, \cite{HaSt15a} established an oracle inequality as well as fast learning rates for generic regularized empirical risk minimization algorithms with observations from $\mathcal{C}$-mixing processes.  

The above-mentioned inequalities are termed as Bernstein-type since they rely on the variance of the random variables. However, we note that these inequalities are usually presented in similar but rather complicated forms which consequently are not easy to apply directly in analyzing the performance of statistical learning schemes and may be also lack of interpretability. On the other hand, existing studies on learning from mixing processes may diverse from one to another since they may be conducted under different assumptions and notations, which leads to barriers in comparing the learnability of these learning algorithms.  

In this work, we first introduce a generalized Bernstein-type inequality and show that it can be instantiated to various stationary mixing processes. Based on the generalized Bernstein-type inequality, we establish an oracle inequality for a class of learning algorithms including ERM \cite[Chapter 6]{StCh08a} and SVMs. On the technical side, the oracle inequality is derived by refining and extending the analysis of \cite{StCh09a}. To be more precise, the analysis in \cite{StCh09a} partially ignored localization with respect to the regularization term, which in our study is addressed by a carefully arranged peeling approach inspired by \cite{StCh08a}. This leads to a sharper stochastic error bound and consequently a sharper bound for the oracle inequality, comparing with that of \cite{StCh09a}. Besides, based on the assumed generalized Bernstein-type inequality, we also provide an interpretation and comparison of the effective numbers of observations when learning from various mixing processes. 

Our second main contribution made in the present study lies in that we present a unified treatment on analyzing learning schemes with various mixing processes. For example, we establish fast learning rates for $\alpha$-mixing and (time-reversed) $\mathcal{C}$-mixing processes by tailoring the generalized oracle inequality. For ERM, our results  match those in the i.i.d.~case, if one replaces the number of observations with the effective number of observations. For LS-SVMs, as far as we know, the best learning rates for the case of geometrically $\alpha$-mixing process are those derived in \cite{XuCh08a, SuWu10a, Feng12a}. When applied to LS-SVMs, it turns out that our oracle inequality leads to faster learning rates that those reported in \cite{XuCh08a} and \cite{Feng12a}. For sufficiently smooth kernels, our rates are also faster than those in \cite{SuWu10a}. For other mixing processes including geometrically $\alpha$-mixing Markov chains, geometrically $\phi$-mixing processes, and geometrically $\mathcal{C}$-mixing processes, our rates for LS-SVMs with Gaussian kernels match essentially the optimal learning rates, while for LS-SVMs with given generic kernel, we only obtain rates that are close to the optimal rates.

The rest of this work is organized as follows: In Section \ref{definitions}, we introduce some basics of statistical learning theory.  Section \ref{GeneralizedBernstein} presents the key assumption of a generalized Bernstein-type inequality for stationary mixing processes, and present some concrete examples that satisfy this assumption. Based on the generalized Bernstein-type inequality, a sharp oracle inequality is developed in Section \ref{MainResults} while its proof is deferred to the Appendix. Section \ref{applications} provides some applications of the newly developed oracle inequality. The paper is ended in Section \ref{Conclusion}.

\section{A Primer in Learning Theory}\label{definitions}

Let $(X, \mathcal{X})$ be a measurable space and $Y \subset \mathbb{R}$ be a closed subset. The goal of (supervised) statistical learning is to find a function $f : X \to \mathbb{R}$ such that for $(x, y) \in X \times Y$ the value $f(x)$ is a good prediction of $y$ at $x$. 
The following definition will help us define what we mean by ``good''. 

\begin{definition}
Let $(X, \mathcal{X})$ be a measurable space and $Y \subset \mathbb{R}$ be a closed subset. 
Then a function $L : X \times Y \times \mathbb{R} \to [0, \infty)$ 
is called a loss function, or simply a loss, if it is measurable.
\end{definition}

In this study, we are interested in loss functions that in some sense can be restricted to domains of the form $X \times Y \times [-M, M]$ as defined below, which is typical in learning theory \cite[Definition 2.22]{StCh08a} and is in fact motivated by the boundedness of $Y$.

\begin{definition}
We say that a loss $L : X \times Y \times \mathbb{R} \to [0, \infty)$ can be clipped at $M > 0$, if, for all $(x, y, t) \in X \times Y \times \mathbb{R}$, we have 
\begin{align*}
L(x, y, \wideparen{t} \,) \leq L(x, y, t), 
\end{align*}
where $\wideparen{t}$ denotes the clipped value of $t$ at $\pm M$, that is
\begin{align*}
\wideparen{t} := 
\begin{cases}
- M & \text{ if } t < - M, \\
t   & \text{ if } t \in [-M, M], \\
M   & \text{ if } t > M.
\end{cases}
\end{align*}
\end{definition}

Throughout this work, we make the following assumptions on the loss function $L$:
\begin{assumption}\label{assumptionL}
 The loss function $L : X \times Y \times \mathbb{R} \to [0, \infty)$ can be clipped at some $M > 0$. Moreover, it is both bounded 
in the sense of $L(x, y, t) \leq 1$ and locally Lipschitz continuous, that is, 
\begin{align}\label{lipschitz}
|L(x, y, t) - L(x, y, t')| \leq |t - t'|\, . 
\end{align}
Here both inequalites are supposed to hold for all $(x, y) \in X \times Y$ and $t, t' \in [- M, M]$.
\end{assumption}

Note that the above assumption with Lipschitz constant equals to one can typically be enforced by scaling. To illustrate the generality of the above assumptions on $L$, let us first consider the case of binary classification, that is $Y := \{ - 1, 1 \}$. For this learning problem one often uses a convex surrogate for the original discontinuous classification loss $\eins_{(-\infty,0]}(y\sign(t))$, since the latter may lead to computationally infeasible approaches. Typical surrogates $L$ belong to the class of margin-based losses, that is,
$L$ is of the form $L(y, t) = \varphi(y t)$, where $\varphi : \mathbb{R} \to [0, \infty)$ is a suitable, convex function. Then $L$ can be clipped, if and only if $\varphi$ has a global minimum, see \cite[Lemma 2.23]{StCh08a}. In particular, the hinge loss, the least squares loss for classification, and the squared hinge loss can be clipped, but the logistic loss for classification and the AdaBoost loss cannot be clipped. On the other hand, \cite{Steinwart09a} established a simple technique, which is similar to inserting a small amount of noise into the labeling process, to construct a clippable modification of an arbitrary convex, margin-based loss. Finally, both the Lipschitz continuity and the boundedness of $L$ can be easily verified for these losses, where for the latter it may be necessary to suitably scale the loss.

Bounded regression is another class of learning problems, where the assumptions made on $L$ are often satisfied. Indeed, if $Y := [- M, M]$ and $L$ is a convex, distance-based loss represented by some $\psi: \mathbb{R} \to [0, \infty)$, that is $L(y, t) = \psi(y - t)$, 
then $L$ can be clipped whenever $\psi(0) = 0$, see again \cite[Lemma 2.23]{StCh08a}. In particular, the least squares loss 
\begin{align}\label{lsloss} 
L(y, t) = (y - t)^2     
\end{align}
and the $\tau$-pinball loss 
\begin{align}\label{pbloss} 
L_\tau(y,t) := \psi(y-t) = 
\begin{cases}
- (1 - \tau) (y - t), & \text{ if } y - t < 0, \\ 
\tau (y - t),         & \text{ if } y - t \geq 0, 
\end{cases}
\end{align}
used for quantile regression can be clipped. Again, for both losses, the Lipschitz continuity and the boundedness can be easily enforced by a suitable scaling of the loss.

Given a loss function $L$ and an $f : X \to \mathbb{R}$, we often use the notation $L \circ f$ for the function $(x, y) \mapsto L(x, y, f(x))$. Our major goal is to have a small average loss for future unseen observations $(x, y)$. This leads to the following definition.

\begin{definition}
Let $L : X \times Y \times \mathbb{R} \to [0, \infty)$ be a loss function and $P$ be a probability measure on $X \times Y$. Then, for a measurable function $f : X \to \mathbb{R}$, the $L$-risk is defined by 
\begin{align*}
\mathcal{R}_{L,P}(f) := \int\limits_{X \times Y} L(x, y, f(x)) \, d P(x, y).
\end{align*}
Moreover, the minimal $L$-risk
\begin{align*}
\mathcal{R}_{L,P}^* := \inf \{ \mathcal{R}_{L,P} (f)\, | \, f : X \to \mathbb{R} \text{ measurable} \}
\end{align*}
is called the Bayes risk with respect to $P$ and $L$. In addition, a measurable function $f_{L,P}^* : X \to \mathbb{R}$ satisfying $\mathcal{R}_{L,P}(f_{L,P}^*) = \mathcal{R}_{L, P}^*$ is called a Bayes decision function. 
\end{definition}

Let $(\Omega, \mathcal{A}, \mu)$ be a probability space, $\mathcal{Z} := (Z_i)_{i \geq 1}$ be an $X \times Y$-valued stochastic process on $(\Omega, \mathcal{A}, \mu)$, we write
\begin{displaymath}
   D := \bigl( (X_1, Y_1), \ldots, (X_n, Y_n) \bigr) := (Z_1, \ldots, Z_n) \in (X \times Y)^n
\end{displaymath}
for a training set of length $n$ that is distributed according to the first $n$ components of $\mathcal{Z}$.
Informally, the goal of learning from a training set $D$ is to find a decision function $f_D$ such that $\mathcal{R}_{L,P}(f_D)$ is close to the minimal risk $\mathcal{R}_{L, P}^*$. Our next goal is to formalize this idea. We begin with the following definition.

\begin{definition}
Let $X$ be a set and $Y \subset \mathbb{R}$ be a closed subset. A learning method $\mathcal{L}$ on $X \times Y$ maps every set $D \in (X \times Y)^n$, $n \geq 1$, to a function $f_D : X \to \mathbb{R}$.
\end{definition}

Now a natural question is whether the functions $f_D$ produced by a specific learning method satisfy 
\begin{align*}
\mathcal{R}_{L,P}(f_D) \to \mathcal{R}_{L, P}^*,\,\,\,\,\,\,\,\,n \to \infty \,.
\end{align*}
If this convergence takes place for all $P$, then the learning method is called universally consistent. In the i.i.d.~case many learning methods are known to be universally consistent, see e.g.~\cite{DeGyLu96} for classification methods, \cite{GyKoKrWa02} for regression methods, and \cite{StCh08a} for generic SVMs. For consistent methods, it is natural to ask how fast the convergence rate is. Unfortunately, 
in most situations uniform convergence rates are impossible, see \cite[Theorem 7.2]{DeGyLu96}, and hence establishing learning rates require some assumptions on the underlying distribution $P$. Again, results in this direction can be found in the above-mentioned books. In the non-i.i.d.~case, \cite{Nobel99a} showed that no uniform consistency is possible if one only assumes that the data generating process $\mathcal{Z}$ is stationary and ergodic. On the other hand, if some further assumptions of the dependence structure of $\mathcal{Z}$ are made, then consistency is possible, see e.g.~\cite{StHuSc09a}.  

Let us now describe the learning algorithms of particular interest to us. To this end, we assume that we have a hypothesis set $\mathcal{F}$ consisting of bounded measurable functions $f : X \to \mathbb{R}$, which is pre-compact with respect to the supremum norm $\|\cdot\|_{\infty}$. Since the cardinality of $\mathcal{F}$ can be infinite, we need to recall the following concept, which will enable us to approximate $\mathcal{F}$ by using finite subsets. 

\begin{definition}
Let $(T, d)$ be a metric space and $\varepsilon > 0$. We call $S \subset T$ an $\varepsilon$-net of $T$ if for all $t \in T$ there exists an 
$s \in S$ with $d(s, t) \leq \varepsilon$. Moreover, the $\varepsilon$-covering number of $T$ is defined by
\begin{align*}
\mathcal{N}(T,d,\varepsilon) := \inf \left\{ n \geq 1 : \exists s_1, \ldots, s_n \in T \text{ such that } T \subset \bigcup_{i=1}^n B_d(s_i, \varepsilon) \right\},
\end{align*}
where $\inf \emptyset := \infty$ and $B_d(s, \varepsilon) := \{ t \in T : d(t, s) \leq \varepsilon \}$ denotes the closed ball with center 
$s \in T$ and radius $\varepsilon$.
\end{definition}

Note that our hypothesis set $\mathcal{F}$ is assumed to be pre-compact, and hence for all $\varepsilon > 0$, the covering number $\mathcal{N}(\mathcal{F},\|\cdot\|_{\infty},\varepsilon)$ is finite. 

Denote $D_n := \frac{1}{n} \sum_{i=1}^n \delta_{(X_i, Y_i)}$, where $\delta_{(X_i, Y_i)}$ denotes the (random) Dirac measure at $(X_i, Y_i)$. In other words, $D_n$ is the empirical measure associated to the data set $D$. Then, the  risk of a function $f : X \to \mathbb{R}$ with respect to this measure
\begin{align*}
\mathcal{R}_{L,D_n}(f) = \frac{1}{n} \sum_{i=1}^n L(X_i,Y_i,f(X_i))
\end{align*}
is called the empirical $L$-risk.  

With these preparations we can now introduce the class of learning methods of interest:
\begin{definition}\label{crerm}
Let $L : X \times Y \times \mathbb{R} \to [0, \infty)$ be a loss that can be clipped at some $M > 0$, $\mathcal{F}$ be a hypothesis set, that is, a set of measurable functions $f : X \to \mathbb{R}$, with $0 \in \mathcal{F}$, and $\Upsilon$ be a regularizer on $\mathcal{F}$, 
that is, $\Upsilon : \mathcal{F} \to [0, \infty)$ with $\Upsilon(0) = 0$. Then, for $\delta \geq 0$, a learning method whose decision functions $f_{D_n, \Upsilon} \in \mathcal{F}$ satisfy
\begin{align}\label{deltaCRERM}
\Upsilon (f_{D_n,\Upsilon}) + \mathcal{R}_{L,D_n} (\wideparen{f}_{D_n,\Upsilon})\leq \inf_{f \in \mathcal{F}} \left( \Upsilon(f) + \mathcal{R}_{L,D_n}(f) \right) + \delta 
\end{align}
for all $n \geq 1$ and $D_n \in (X \times Y)^n$ is called $\delta$-approximate clipped regularized empirical risk minimization ($\delta$-CR-ERM) with respect to $L$, $\mathcal{F}$, and $\Upsilon$.
\end{definition}

In the case $\delta = 0$, we simply speak of clipped regularized empirical risk minimization (CR-ERM). In this case, $f_{D_n, \Upsilon}$ in fact can be also defined as follows:
\begin{align*}
 f_{D_n, \Upsilon} = \argmin_{f \in \mathcal{F}} \Upsilon(f) + \mathcal{R}_{L, D_n} (f).
\end{align*}
Note that on the right-hand side of \eqref{deltaCRERM} the unclipped loss is considered, and hence CR-ERMs do not necessarily minimize the 
regularized clipped empirical risk $\Upsilon(\cdot) + \mathcal{R}_{L,D_n}(\wideparen{\cdot})$. Moreover, in general CR-ERMs do not minimize
the regularized risk $\Upsilon(\cdot) + \mathcal{R}_{L,D_n}(\cdot)$ either, because on the left-hand side of \eqref{deltaCRERM} 
the clipped function is considered. However, if we have a minimizer of the unclipped regularized risk, then it automatically satisfies \eqref{deltaCRERM}. In particular, ERM decision functions satisfy \eqref{deltaCRERM} for the regularizer $\Upsilon := 0$ and $\delta : = 0$, 
and SVM decision functions satisfy \eqref{deltaCRERM} for the regularizer $\Upsilon := \lambda \|\cdot\|_H^2$ and $\delta := 0$. In other words, ERM and SVMs are CR-ERMs.

\section{Mixing Processes and A Generalized Bernstein-type Inequality}\label{GeneralizedBernstein}
In this section, we introduce a generalized Bernstein-type inequality. Here the inequality is said to be \textit{generalized} in that it depends on the effective number of observations instead of the number of observations, which, as we shall see later, makes it applicable to various stationary stochastic processes. To this end, let us first introduce several mixing processes.

\subsection{Several Stationary Mixing Processes}
We begin with introducing some notations. Recall that $(X,\mathcal{X})$ is a measurable space and $Y \subset \mathbb{R}$ is closed. We further denote $(\Omega, \mathcal{A}, \mu)$ as a probability space, $\mathcal{Z} := (Z_i)_{i \geq 1}$ as an $X \times Y$-valued stochastic process on $(\Omega, \mathcal{A}, \mu)$, 
$\mathcal{A}_1^i$ and $\mathcal{A}_{i+n}^{\infty}$ as the $\sigma$-algebras generated by $(Z_1, \ldots, Z_i)$ and $(Z_{i+n}, Z_{i+n+1}, \ldots)$, respectively. Throughout, we assume that $\mathcal{Z}$ is stationary, that is, the $(X \times Y)^n$-valued random variables $(Z_{i_1}, \ldots, Z_{i_n})$ and $(Z_{i_1+i}, \ldots, Z_{i_n+i})$ have the same distribution for all $n$, $i$, $i_1, \ldots, i_n \geq 1$. Let $\chi:\Omega\rightarrow X$ be a measurable map. $\mu_\mathsmaller{\chi}$ is denoted as the $\chi$-image measure of $\mu$, which is defined as $\mu_\mathsmaller{\chi}(B):= \mu(\chi^{-1}(B))$, $B\subset X$ measurable. We denote $L_p(\mu)$ as  the space of (equivalence classes of) measurable functions $g : \Omega \to \mathbb{R}$ with finite $L_p$-norm $\|g	\|_p$. Then $L_p(\mu)$ together with $\|g\|_p$ forms a Banach space. Moreover, if  $\mathcal{A}'\subset \mathcal{A}$ is a sub-$\sigma$-algebra, then $L_p(\mathcal A', \mu)$ denotes the space of all $\mathcal{A}'$-measurable functions $g\in L_p(\mu)$. $\ell_2^d$ denotes the space of $d$-dimensional sequences with finite Euclidean norm. Finally, for a Banach space $E$, we write $B_E$ for its closed unit ball.

In order to characterize the mixing property of a stationary stochastic process, various notions have been introduced in the literature \cite{Bradley05a}. Several frequently considered examples are $\alpha$-mixing, $\beta$-mixing and $\phi$-mixing, which are, respectively, defined as follows:  

\begin{definition}[\bf $\alpha$-Mixing Process]
A stochastic process $\mathcal{Z} = (Z_i)_{i \geq 1}$ is called \textbf{$\alpha$-mixing} if there holds
\begin{align*}
\lim_{n \to \infty} \alpha(\mathcal{Z}, n) = 0,
\end{align*}
where $\alpha(\mathcal{Z}, n)$ is the $\alpha$-mixing coefficient defined by
\begin{align*}
\alpha(\mathcal{Z}, n) 
= \sup_{A \in \mathcal{A}_1^i,\, B \in \mathcal{A}_{i+n}^{\infty}} | \mu(A \cap B) - \mu(A) \mu(B) |.
\end{align*}
Moreover, a stochastic process $\mathcal{Z}$ is called \textbf{geometrically $\alpha$-mixing}, if 
\begin{align}\label{AlphaExpDecay}
\alpha(\mathcal{Z}, n) \leq c \exp (- b n^{\gamma}), \,\,\,\,\,\,\,\,\,\, n \geq 1, 
\end{align}
for some constants $b > 0$, $c \geq 0$, and $\gamma > 0$.
\end{definition}

\begin{definition}[\bf $\beta$-Mixing Process]
A stochastic process $\mathcal{Z} = (Z_i)_{i \geq 1}$ is called \textbf{$\beta$-mixing} if there holds
\begin{align*}
\lim_{n \to \infty} \beta(\mathcal{Z}, n) = 0,
\end{align*}
where $\beta(\mathcal{Z}, n)$ is the $\beta$-mixing coefficient defined by
\begin{align*}
\beta(\mathcal{Z}, n) := \mathbb{E} \sup_{B \in \mathcal{A}_{i+n}^{\infty}} | \mu(B) - \mu(B|\mathcal{A}_1^i) |.
\end{align*}
\end{definition}

\begin{definition}[\bf $\phi$-Mixing Process]
A stochastic process $\mathcal{Z} = (Z_i)_{i \geq 1}$ is called \textbf{$\phi$-mixing} if there holds
\begin{align*}
\lim_{n \to \infty} \phi(\mathcal{Z}, n) = 0,
\end{align*}
where $\phi(\mathcal{Z}, n)$ is the $\phi$-mixing coefficient defined by
\begin{align*}
\phi(\mathcal{Z}, n) := \sup_{A \in \mathcal{A}_1^i, B \in \mathcal{A}_{i+n}^{\infty}} | \mu(B) - \mu(B|A) |. 
\end{align*}
\end{definition}
 
The $\alpha$-mixing concept was introduced by Rosenblatt \cite{Rosenblatt56a} while the $\beta$-mixing coefficient was introduced by \cite{VoRo59a,VoRo59b}, and was attributed there to Kolmogorov. Moreover, Ibragimov \cite{Ibragimov62a} introduced the  $\phi$-coefficient, see also \cite{IbRo79a}. An extensive and thorough account on mixing concepts including $\beta$- and $\phi$-mixing is also provided by \cite{Bradley07}. It is well-known that, see e.g.~\cite[Section 2]{HaSt14a}, the $\beta$- and $\phi$-mixing sequences are also $\alpha$-mixing, see Figure \ref{Relationship1Alpha}. From the above definition, it is obvious that i.i.d.~processes are also geometrically $\alpha$-mixing processes since \eqref{AlphaExpDecay} is satisfied for $c = 0$ and all $b, \gamma > 0$. Moreover, several time series models such as ARMA and GARCH, which are often used to describe, e.g.~financial data, satisfy \eqref{AlphaExpDecay} under natural conditions \cite[Chapter 2.6.1]{FaYa03a}, and the same is true for many Markov chains including some dynamical systems perturbed by dynamic noise, see e.g. \cite[Chapter 3.5]{Vidyasagar03a}.

Another important class of mixing processes called (time-reversed) $\mathcal{C}$-mixing processes was originally introduced in \cite{Maume06a} and recently investigated in \cite{HaSt15a}. As shown below, it is defined in association with a function class that takes into account of the smoothness of functions and therefore could be more general in the dynamical system context. As illustrated in \cite{Maume06a} and \cite{HaSt15a}, the $\mathcal{C}$-mixing process encounters a large family of dynamical systems. Given a semi-norm $\|\cdot\|$ on a vector space $E$ of bounded measurable functions $f : Z \to \mathbb{R}$, we define the \textbf{$\mathcal{C}$-norm} by
\begin{align}\label{lambdanorm}
\|f\|_{\mathcal{C}} := \|f\|_{\infty} + \|f\|,
\end{align} 
and denote the space of all bounded $\mathcal{C}$-functions by 
$\mathcal{C}(Z) := \left\{ f : Z \to \mathbb{R} \, \big| \, \|f\|_{\mathcal{C}} < \infty \right\}$.

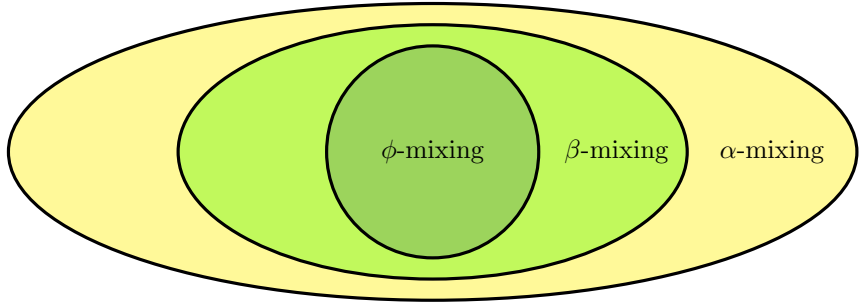
\begin{figure}
\begin{center}
\begin{tikzpicture}[scale = 0.94]
  \begin{scope}[fill opacity=0.4]
    \fill[blue]  \firstcircle;
    \fill[green] \thirdellipse;
    \fill[yellow] \fourthellipse;
  \end{scope}
  \begin{scope}[very thick,font=\large]
    \draw \firstcircle node at (6,-4) {\small{$\phi$-mixing}};
    \draw \thirdellipse node at (8.6,-4) {\small{$\beta$-mixing}};
    \draw \fourthellipse node at (10.8,-4) {\small{$\alpha$-mixing}};
  \end{scope}
\end{tikzpicture}
\caption{Relations among $\alpha$-, $\beta$-, and $\phi$-mixing processes}
\label{Relationship1Alpha}
\end{center}
\end{figure}

\begin{figure}
\begin{center}
\begin{tikzpicture}[scale = 0.94]
  \begin{scope}[fill opacity=0.4]
    \fill[blue]  \firstcircle;
    \fill[green] \firstellipse;
    \fill[yellow] \secondellipse;
  \end{scope}
  \begin{scope}[very thick,font=\large]
    \draw \firstcircle node at (6,-4) {\small{$\phi$-mixing}};
    \draw \firstellipse node at (1,-4) {\small{$\alpha$-mixing}};
    \draw \secondellipse node at (11,-4) {\small{$\mathcal{C}$-mixing}};
  \end{scope}
\end{tikzpicture}
\caption{Relations among $\alpha$-, $\phi$-, and $\mathcal{C}$-mixing processes}
\label{Relationship2}
\end{center}
\end{figure}
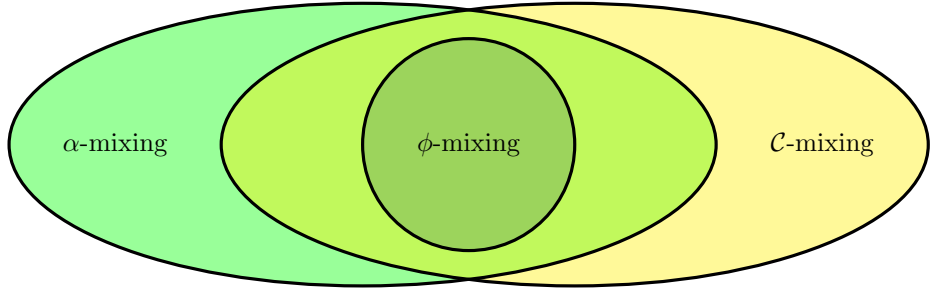

\begin{definition}[\bf $\mathcal{C}$-Mixing Process]\label{definition::c_mixing}
Let $\mathcal{Z} = (Z_i)_{i \geq 1}$ be a stationary stochastic process. For $n \geq 1$, the $\mathcal{C}$-mixing coefficients are defined by
\begin{align*} 
\phi_{\mathcal{C}}(\mathcal{Z}, n) := \sup \big \{ \mathrm{cor}(\psi, h \circ Z_{k+n}) : k \geq 1,\,\psi \in B_{L_1(\mathcal{A}_1^k, \mu)}, h \in B_{\mathcal{C}(Z)} \big\},
\end{align*} 
and similarly, the time-reversed $\mathcal{C}$-mixing coefficients are defined by
\begin{align*} 
\phi_{\mathcal{C}, \text{rev}}(\mathcal{Z}, n) := \sup \big\{ \mathrm{cor}(h \circ Z_k, \varphi) : k \geq 1, h \in B_{\mathcal{C}(Z)}, \varphi \in B_{L_1(\mathcal{A}_{k+n}^{\infty}, \mu)}  \big\}.
\end{align*} 
Let $(d_n)_{n \geq 1}$ be a strictly positive sequence converging to $0$. Then we say that $\mathcal{Z}$ is \textbf{(time-reversed) $\mathcal{C}$-mixing} with rate $(d_n)_{n \geq 1}$, if we have $\phi_{\mathcal{C},(\text{rev})}(\mathcal{Z}, n) \leq d_n$ for all $n \geq 1$. Moreover, if $(d_n)_{n \geq 1}$ is of the form
\begin{align*}   
d_n := c \exp \bigl( - b n^{\gamma} \bigr), ~~~~~~ n \geq 1, 
\end{align*}
for some constants $c > 0$, $b > 0$, and $\gamma > 0$, then $\mathcal{X} $ is called  \textbf{ geometrically  (time-reversed) $\mathcal{C}$-mixing}. If $(d_n)_{n \geq 1}$ is of the form
\begin{align*}   
d_n := c\cdot n^{-\gamma}, ~~~~~~ n \geq 1, 
\end{align*}
for some constants $c > 0$, and $\gamma > 0$, then $\mathcal{X} $ is called  \textbf{ polynomial (time-reversed) $\mathcal{C}$-mixing}.
\end{definition}

Figure \ref{Relationship2} illustrates the relations among $\alpha$-mixing processes, $\phi$-mixing processes, and $\mathcal{C}$-mixing processes. Clearly, $\phi$-mixing processes are $\mathcal{C}$-mixing  \cite{HaSt15a}. Furthermore, various  discrete-time dynamical systems including Lasota-Yorke maps, uni-modal maps, and piecewise expanding maps in higher dimension are $\mathcal{C}$-mixing, see \cite{Maume06a}. Moreover, smooth expanding maps on manifolds, piecewise expanding maps, uniformly hyperbolic attractors, and non-uniformly hyperbolic uni-modal maps are time-reversed geometrically $\mathcal{C}$-mixing, see \cite[Proposition 2.7, Proposition 3.8, Corollary 4.11 and Theorem 5.15]{Viana97a}, respectively.

\subsection{A Generalized Bernstein-type Inequality}
As discussed in the introduction, the Bernstein-type inequality plays an important role in many areas of probability and statistics. In the statistical learning theory literature, it is also crucial in conducting concentrated estimation for learning schemes. As mentioned previously, these inequalities are usually  presented in rather complicated forms under different assumptions, which therefore limit their portability to other contexts. However, what is common behind these inequalities is their relying on the boundedness assumption of the variance. Given the above discussions, in this subsection, we introduce the following generalized Bernstein-type inequality, with the hope of making it as an off-the-shelf tool for various mixing processes.   

\begin{assumption}\label{BernsteinInequalityGeneral}
Let $\mathcal{Z} := (Z_i)_{i \geq 1}$ be an $X \times Y$-valued, stationary stochastic process on $(\Omega, \mathcal{A}, \mu)$ and $P := \mu_{Z_1}$. Furthermore, let $h : X \times Y \to \mathbb{R}$ be a bounded measurable function for which there exist constants $B > 0$ and $\sigma \geq 0$ such that $\mathbb{E}_P h = 0$, $\mathbb{E}_P h^2 \leq \sigma^2$, and $\|h\|_{\infty} \leq B$. Assume that, for all $\varepsilon > 0$, there exist constants $n_0 \geq 1$ independent of $\varepsilon$ and $\neff \geq 1$ such that for all $n \geq n_0$, we have
\begin{align}\label{BersteinGeneral}
P \left( \frac{1}{n} \sum_{i=1}^n h(Z_i) \geq \varepsilon \right)\leq C \exp \left( - \frac{\varepsilon^2 \neff}{c_\sigma \sigma^2 + c_{\mathsmaller{B}} \varepsilon B} \right),
\end{align}
where $\neff\leq n$ is the effective number of observations, $C$ is a constant independent of $n$, and $c_\sigma  $, $c_{\mathsmaller{B}}  $ are positive constants.
\end{assumption}

Note that in Assumption \ref{BernsteinInequalityGeneral}, the generalized Bernstein-type inequality \eqref{BersteinGeneral} is assumed with respect to $\neff$ instead of $n$, which is a function of $n$ and is termed as the \textit{effective number of observations}. The terminology, \textit{effective number of observations}, ``provides a heuristic understanding of the fact that the statistical properties of autocorrelated data are similar to a suitably defined number of independent observations" \cite{zikeba2010effective}. We will continue our discussion on the effective number of observations $\neff$ in Subsection \ref{subsec::effective_observations} below.

\subsection{Instantiation to Various Mixing Processes} 
We now show that the generalized Bernstein-type inequality in Assumption \ref{BernsteinInequalityGeneral} can be instantiated to various mixing processes, e.g., i.i.d processes, geometrically $\alpha$-mixing processes, restricted geometrically $\alpha$-mixing processes, geometrically $\alpha$-mixing Markov chains, $\phi$-mixing processes, geometrically $\mathcal{C}$-mixing processes, polynomially $\mathcal{C}$-mixing processes, among others. 

\subsubsection{I.I.D Processes}
Clearly, the classical Bernstein inequality \cite{Bernstein1946a} satisfies \eqref{BersteinGeneral} with $n_0 = 1$, $C = 1$, $c_\sigma = 2$,  $c_{\mathsmaller{B}} = 2/3$, and $\neff=n$. 

\subsubsection{Geometrically $\alpha$-Mixing Processes}
For stationary geometrically $\alpha$-mixing processes $\mathcal{Z}$,
\cite[Theorem 4.3]{MoMa96a}
bounds the left-hand side of \eqref{BersteinGeneral} by
\begin{align*}   
(1 + 4 e^{-2} c) \exp \biggl( - \frac{3 \varepsilon^2 n^{{(\gamma)}}}{6 \sigma^2 + 2 \varepsilon B} \biggr)
\end{align*}
for any $n \geq 1$, and $\varepsilon > 0$, where 
\begin{align*}
n^{{(\gamma)}} := \Bigl\lfloor n \bigl\lceil ( 8n/b )^{1/(\gamma + 1)} \bigr\rceil^{-1} \Bigr\rfloor,
\end{align*}
where $\lfloor t \rfloor$ is the largest integer less than or equal to $t$ and $\lceil t \rceil$ is the smallest integer greater than or equal to $t$ for $t \in \mathbb{R}$. Observe that $\lceil t \rceil \leq 2 t$ for all $t \geq 1$ and $\lfloor t \rfloor \geq t/2$ for all $t \geq 2$. From this it is easy to conclude that, for all $n\geq n_0$ with  
\begin{align}\label{geometrical_alpha_n0}
n_0 := \max \{ b/8, 2^{2+5/\gamma} b^{-1/\gamma} \},
\end{align}
we have $n^{{(\gamma)}} \geq 2^{- \frac{2 \gamma + 5}{\gamma + 1}} b^{\frac{1}{\gamma + 1}} n^{\frac{\gamma}{\gamma + 1}}$.
Hence, the right-hand side of \eqref{BersteinGeneral} takes the form
\begin{align*}  
(1 + 4 e^{-2} c) \exp \biggl( - \frac{\varepsilon^2 n^{\gamma/(\gamma + 1)}}{( 8^{2+\gamma}/b )^{1/(1+\gamma)} ( \sigma^2 + \varepsilon B/3 )} \biggr).
\end{align*}
It is easily seen that this bound is of the generalized form \eqref{BersteinGeneral} with $n_0$ given in \eqref{geometrical_alpha_n0}, $C = 1 + 4 e^{-2} c$, $c_\sigma = ( 8^{2+\gamma}/b )^{1/(1+\gamma)}$, $c_{\mathsmaller{B}} = ( 8^{2+\gamma}/b )^{1/(1+\gamma)}/3$, and $\neff = n^{\gamma/(\gamma + 1)}$.

\subsubsection{Restricted Geometrically $\alpha$-Mixing Processes}
A restricted geometrically $\alpha$-mixing process is referred to as a geometrically $\alpha$-mixing process  (see Definition \ref{definition::c_mixing})  with $\gamma \geq 1$. For this kind of $\alpha$-mixing processes, \cite[Theorem 2]{MePeRi09a} established a bound for the right-hand side of \eqref{BersteinGeneral} that takes the following form
\begin{align}\label{bernsteininequalityMePeRi1Alpha}
c_c \exp \biggl( - \frac{c_b \varepsilon^2 n}{v^2 + B^2/n + \varepsilon B (\log n)^2} \biggr),
\end{align}
for all $\varepsilon > 0$ and $n \geq 2$, where $c_b$ is some constant depending only on $b$, $c_c$ is some constant depending only on $c$, 
and $v^2$ is defined by
\begin{align}\label{vquadrat}
 v^2 := \sigma^2 + 2 \sum_{2 \leq i \leq n} |\mathrm{cov}(h(X_1), h(X_i))|\, .
\end{align}

In fact, for any $\epsilon > 0$, by using Davydov's covariance inequality \cite[Corollary to Lemma 2.1]{Davydov68a} with 
$p = q = 2 + \epsilon$ and $r = (2+\epsilon)/\epsilon$, 
we obtain for $i \geq 2$,
\begin{align*}
\mathrm{cov}(h(Z_1), h(Z_i))
& \leq 8 \|h(Z_1)\|_{2+\epsilon} \|h(Z_i)\|_{2+\epsilon}\alpha(\mathcal{Z}, i-1)^{\epsilon/(2+\epsilon)}
\\
& \leq 8 \bigl( \mathbb{E}_P h^{2+\epsilon} \bigr)^{2/(2+\epsilon)}\bigl( c e^{-b(i-1)} \bigr)^{\epsilon/(2+\epsilon)}
\\
& \leq 8 c^{\epsilon/(2+\epsilon)} B^{2\epsilon/(2+\epsilon)} \sigma^{2 \cdot 2/(2+\epsilon)}\exp \bigl( - b \epsilon (i-1) /(2+\epsilon)  \bigr).
\end{align*}
Consequently, we have
\begin{align*}
v^2 
& \leq \sigma^2 + 16 c^{\epsilon/(2+\epsilon)} B^{2\epsilon/(2+\epsilon)} \sigma^{2 \cdot 2/(2+\epsilon)}\sum_{2 \leq i \leq n} \exp \bigl( - b \epsilon(i-1)/(2+\epsilon)  \bigr)
\\
& \leq \sigma^2 + 16 c^{\epsilon/(2+\epsilon)} B^{2\epsilon/(2+\epsilon)} \sigma^{2 \cdot 2/(2+\epsilon)}\sum_{i \geq 1} \exp ( - b \epsilon i / (2+\epsilon) ).
\end{align*}
Setting
\begin{align}\label{CzetaAlpha}
 c_{\epsilon} := 16 c^{\epsilon/(2+\epsilon)} B^{2\epsilon/(2+\epsilon)} 
 \sum_{i \geq 1} \exp ( - b \epsilon i/ (2+\epsilon) ),
\end{align}
then the probability bound \eqref{bernsteininequalityMePeRi1Alpha} can be reformulated as 
\begin{align*}  
c_c \exp \biggl( - \frac{c_b \varepsilon^2 n}{\sigma^2 + c_{\epsilon} \sigma^{4/(2+\epsilon)} + B^2/n + \varepsilon B (\log n)^2} \biggr).
\end{align*}
When $ n \geq n_0$ with 
\begin{align*}
 n_0 := \max \left\{ 3, \exp \bigl( \sqrt{c_{\epsilon}} \sigma^{- \epsilon/(2+\epsilon)} \bigr),B^2/\sigma^2 \right\},
\end{align*}
it can be  further upper bounded by
\begin{align*}
c_c \exp \left( - \frac{c_b \varepsilon^2 (n/(\log n)^2)}{3 \sigma^2 + \varepsilon B } \right).
\end{align*}
Therefore, the Bernstein-type inequality for the restricted $\mathcal{C}$-mixing process is also of the generalized form \eqref{BersteinGeneral} where $C = c_c$, $c_\sigma = 3/c_b$, $c_{\mathsmaller{B}} = 1/c_b$, and $\neff = n/(\log n)^2$.

\subsubsection{Geometrically $\alpha$-Mixing Markov Chains}
For the stationary geometrically $\alpha$-mixing Markov chain with centered and bounded random variables, \cite{Adamczak08a} bounds the  left-hand side of \eqref{BersteinGeneral} by
\begin{align*}
\exp \biggl( - \frac{n \varepsilon^2}{\tilde \sigma^2 + \varepsilon B \log n} \biggr),
\end{align*}
where $\tilde \sigma^2 = \lim_{n \to \infty} \frac{1}{n}\cdot \mathrm{Var} \sum_{i=1}^n h(X_i)$.

Following the similar arguments as in  the restricted geometrically $\mathcal{C}$-mixing case, we know that for an arbitrary $\epsilon > 0$, there holds
\begin{align*}
\mathrm{Var} \sum_{i=1}^n h(X_i)
& = n \sigma^2 + 2 \sum_{1 \leq i < j \leq n} |\mathrm{cov}(h(X_i), h(X_j))|
\\
& \leq n \sigma^2 + 2 n \sum_{2 \leq i \leq n} |\mathrm{cov}(h(X_1), h(X_i))|
\\
& = n \cdot v^2 \leq n \bigl( \sigma^2 + c_{\epsilon} \sigma^{4/(2+\epsilon)} \bigr),
\end{align*}
where $v^2$ is defined in \eqref{vquadrat} and $c_{\epsilon}$ is given in \eqref{CzetaAlpha}.  Consequently the following inequality holds
\begin{align*}   
\exp \biggl( - \frac{n \varepsilon^2}{\sigma^2 + c_{\epsilon} \sigma^{4/(2+\epsilon)} + \varepsilon B \log n} \biggr) \leq \exp \biggl( - \frac{\varepsilon^2 (n/\log n)} {2 \sigma^2 + \varepsilon B } \biggr),
\end{align*}
for $ n \geq n_0$ with 
\begin{align*}
n_0 := \max \Bigl\{ 3, \exp \bigl( c_{\epsilon} \sigma^{-2\epsilon/(2+\epsilon)} \bigr) \Bigr\}.
\end{align*}
That is, when $n \geq n_0$ the Bernstein-type inequality for the geometrically $\alpha$-mixing Markov chain can be also formulated as the generalized form \eqref{BersteinGeneral} with $C = 1$, $c_\sigma = 2$, $c_{\mathsmaller{B}} = 1$, and $\neff=n/\log n$.

\subsubsection{$\phi$-Mixing Processes}\label{subsubsection phi}
For a $\phi$-mixing process $\mathcal{Z}$, \cite{Samson00a} provides the following bound for the left-hand side of \eqref{BersteinGeneral}  
\begin{align*} 
\exp \biggl( - \frac{\varepsilon^2 n}{8 c_{\phi} (4 \sigma^2 + \varepsilon B)} \biggr),
\end{align*}
where $c_{\phi} := \sum_{k=1}^{\infty} \sqrt{\phi(\mathcal{Z}, k)}$. Obviously, it is of the general form \eqref{BersteinGeneral} with $n_0 = 1$, $C = 1$, $c_\sigma = 32 c_{\phi}$, $c_{\mathsmaller{B}} = 8 c_{\phi}$, and $\neff=n$.

\subsubsection{Geometrically $\mathcal{C}$-Mixing Processes}
For the geometrically $\mathcal{C}$-mixing process in Definition \ref{definition::c_mixing}, \cite{HaSt15a} recently developed a Bernstein-type inequality. To state the inequality, the following assumption on the semi-norm $\|\cdot\|$ in \eqref{lambdanorm} is needed 
\begin{align*}
\bigl\| e^f \bigr\|  \leq  \bigl\| e^f \bigr\|_{\infty} \|f\|,  \,\,\,\,\,\,\,  f \in \mathcal{C}(Z).
\end{align*}
Under the above restriction on the semi-norm $\|\cdot\|$ in \eqref{lambdanorm}, and the assumptions that $\|h\|\leq A$, $\|h\|_{\infty} \leq B$, and $\mathbb E_P h^2 \leq \sigma^2$, \cite{HaSt15a} states that when $n\geq n_0$ with  	 
\begin{align}\label{nzero}
n_0 := \max \left\{ \min \bigl\{ m \geq 3 : m^2 \geq 808 c (3A + B)/B \text{ and } m/(\log m)^{2/\gamma} \geq 4 \bigr\}, e^{3/b} \right\}.
\end{align}
the right-hand side of \eqref{BersteinGeneral} takes the form
\begin{align}\label{bernsteininequalityCMixing}
2 \exp \biggl( - \frac{\varepsilon^2 n / (\log n)^{2/\gamma}}{8  (\sigma^2 + \varepsilon B/3)} \biggr).
\end{align}
It is easy to see that \eqref{bernsteininequalityCMixing} is also of the generalized form \eqref{BersteinGeneral} with $n_0$ given in \eqref{nzero}, $C = 2$, $c_\sigma = 8$, $c_{\mathsmaller{B}} = 8/3$, and $\neff = n / (\log n)^{2/\gamma}$.

\subsubsection{Polynomially $\mathcal{C}$-Mixing Processes}
For the polynomially $\mathcal{C}$-mixing processes, a Bernstein-type inequality was established recently in \cite{hanglearning}. Under the same restriction on the semi-norm $\|\cdot\|$ and assumption on $h$ as in the geometrically $\mathcal{C}$-mixing case, it states that when $n\geq n_0$ with
\begin{align}\label{ploynomial_c_n_0}
n_0 :=\max\Bigl\{\bigl(808c(3A+B)/B\bigr)^{1/2}, \, 4^{(\gamma+1)/(\gamma-2)}\Bigr\}, \,\, \gamma > 2,
\end{align}
the right-hand side of \eqref{BersteinGeneral} takes the form
\begin{align*} 
2 \exp \biggl( - \frac{\varepsilon^2 n^{(\gamma-2)/(\gamma+1)}}{8  (\sigma^2 + \varepsilon B/3)} \biggr).
\end{align*}
An easy computation shows that it is also of the generalized form \eqref{BersteinGeneral} with $n_0$ given in \eqref{ploynomial_c_n_0}, $C = 2$, $c_\sigma = 8$, $c_{\mathsmaller{B}} = 8/3$, and $\neff = n^{(\gamma-2)/(\gamma+1)}$ with $\gamma>2$.

\subsection{From Observations to Effective Observations}\label{subsec::effective_observations}
The generalized Bernstein-type inequality in Assumption \ref{BernsteinInequalityGeneral} is assumed with respect to the\textit{ effective number of observations} $\neff$. As verified above, the assumed generalized Bernstein-type inequality  indeed holds for many   mixing processes whereas $\neff$ may take different values in different circumstances. Supposing that we have $n$ observations drawn from a certain  mixing process discussed above, Table \ref{Eff.No.} reports its effective number of observations. As mentioned above, it can be roughly treated as the number of independent observations when inferring the statistical properties of correlated data. In this subsection, we make some effort in presenting an intuitive understanding towards the meaning of the effective number of observations. 
\begin{table}[h] 
 \setlength{\tabcolsep}{9pt}
 \renewcommand{\arraystretch}{1.5}
\begin{center}
    \caption{Effective Number of Observations for Different Mixing Processes}
  \begin{tabular}{ l | l }
    \hline
    \emph{examples} & \emph{effective number of observations} \\ \hline
       \text{i.i.d processes} & $n $ \\  \hline  
   \text{geometrically $\alpha$-mixing processes} & $n^{\gamma / (\gamma+1)}$ \\  \hline
   \text{restricted geometrically $\alpha$-mixing processes} & $n/(\log n)^2$ \\ \hline
    \text{geometrically $\alpha$-mixing Markov chains} & $n/\log n$ \\ \hline
    \text{$\phi$-mixing processes} & $n$ \\ \hline
    \text{geometrically $\mathcal{C}$-mixing processes} & $n/(\log n)^{2/\gamma}$ \\ \hline
    \text{polynomially $\mathcal{C}$-mixing processes} & $n^{(\gamma-2)/(\gamma+1)}$ with $\gamma>2$ \\ \hline
  \end{tabular}
    \label{Eff.No.}
\end{center}
\end{table}

The terminology - \textit{effective observations}, which may be also referred as the \textit{effective number of observations} depending on the context, appeared probably first in \cite{bayley1946effective} when studying the autocorrelated time series data. In fact, many similar concepts can be found in the literature of statistical learning from mixing processes, see e.g., \cite{lubman1969spatial,yaglom1987correlation,MoMa96a,csen1998small,zikeba2010effective}. For stochastic processes, \textit{mixing} indicates the \textit{asymptotic independence}. In some sense, the effective observations can be taken as the independent observations that can contribute when learning from a certain mixing process. 

\medskip
\medskip

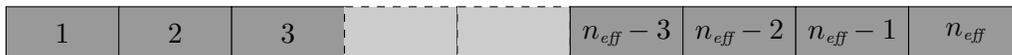
\begin{figure}[h]
\begin{center}
\begin{tikzpicture} 
\draw[fill=black!40!white, solid] (0,0) rectangle (1.5,0.7)node[pos=.5] {$1$};
\draw[fill=black!40!white, solid] (1.5,0) rectangle (3.0,0.7)node[pos=.5] {$2$};
\draw[fill=black!40!white, solid] (3.0,0) rectangle (4.5,0.7)node[pos=.5] {$3$};
\draw[fill=black!20!white, dashed] (4.5,0) rectangle (6.0,0.7);
\draw[fill=black!20!white, dashed] (6.0,0) rectangle (7.5,0.7);
\draw[fill=black!40!white, solid] (7.5,0) rectangle (9.0,0.7)node[pos=.5] {$\neff-3$};
\draw[fill=black!40!white, solid] (9.0,0) rectangle (10.5,0.7)node[pos=.5] {$\neff-2$};
\draw[fill=black!40!white, solid] (10.5,0) rectangle (12.0,0.7)node[pos=.5] {$\neff-1$};
\draw[fill=black!40!white, solid] (12.0,0) rectangle (13.5,0.7)node[pos=.5] {$\neff$};
\end{tikzpicture}
\end{center}
\caption{An illustration of the effective number of observations when inferring the statistical properties of the data drawn from mixing processes. The data of size $n$ are split into $\neff$ blocks, each of size $n/\neff$.}\label{block_illustration}
\end{figure}

In fact, when inferring statistical properties with data drawn from mixing processes, a frequently employed technique is to split the data of size $n$ into $k$ blocks, each of size $\ell$ \cite{Yu94a,MoMa96a,bosq2012nonparametric,MoRo09b,HaSt15a}. Each block may be constructed either by choosing consecutive points in the original observation set or by a jump selection \cite{MoMa96a, HaSt15a}. With the constructed blocks, one can then introduce a new sequence of blocks that are independent between the blocks by using the coupling technique. Due to the mixing assumption, the difference between the two sequences of blocks can be measured with respect to a certain metric. Therefore, one can deal with the independent  blocks instead of dependent blocks now. On the other hand, for observations in each originally constructed block, one can again apply the coupling technique \cite{bosq2012nonparametric,duchi2012ergodic} to tackle, e.g., introducing $\ell$ new i.i.d observations and bounding the difference between the newly introduced observations and the original   observations with respect to a certain metric. During this process, one tries to ensure that the number of blocks $k$ is as large as possible, for which $\neff$ turns out to be the choice. An intuitive illustration of this procedure is shown in Fig.\,\ref{block_illustration}.

\section{A Generalized Sharp Oracle Inequality} \label{MainResults}

In this section we present one of our main results: an oracle inequality for learning from mixing processes satisfying the generalized Bernstein-type inequality \eqref{BersteinGeneral}. We first introduce a few more notations. Let $\mathcal{F}$ be a hypothesis set in the sense of Definition \ref{crerm}. For 
\begin{align}\label{rstar}
r^* := \inf_{f \in \mathcal{F}} \Upsilon(f) + \mathcal{R}_{L,P}(\wideparen{f} \,) - \mathcal{R}_{L,P}^*,
\end{align}
and $r > r^*$, we write
\begin{align}\label{Fr}
\mathcal{F}_r := \left\{ f \in \mathcal{F} : \Upsilon(f) + \mathcal{R}_{L,P}(\wideparen{f}\,) - \mathcal{R}_{L,P}^* \leq r \right\}.
\end{align}
Since $L(x, y, 0) \leq 1$, $0 \in \mathcal{F}$, and $\Upsilon(0) = 0$,
then we have $r^* \leq 1$, 
Furthermore, we assume that there exists 
a function $\varphi: (0, \infty) \to (0, \infty)$ that satisfies
\begin{align}\label{coveringnumber}
\ln \mathcal{N}(\mathcal{F}_r, \|\cdot\|_{\infty}, \varepsilon) \leq \varphi(\varepsilon) r^p
\end{align}
for all $\varepsilon > 0$, $r > 0$ and a suitable constant $p \in (0, 1]$. Note that there are actually many hypothesis sets  satisfying Assumption \eqref{coveringnumber}, see Section \ref{applications} for some examples.

Now, we present the oracle inequality as follows:

\begin{thm}\label{oracleinequality}
Let $\mathcal{Z}$ be a stochastic process satisfying Assumption \ref{BernsteinInequalityGeneral} with constants $n_0 \geq 1$, $C > 0$, $c_\sigma> 0$, and $c_{\mathsmaller{B}} > 0$.  Furthermore, let $L$ be a loss satisfying Assumption \ref{assumptionL}.  Moreover, assume that there exists a Bayes decision function $f_{L, P}^*$ and constants $\vartheta \in [0, 1]$ and $V \geq 1$ such that 
\begin{align}\label{variancebound}
\mathbb{E}_P (L \circ \wideparen{f} - L \circ f_{L, P}^*)^2 \leq V \cdot \left( \mathbb{E}_P (L \circ \wideparen{f} - L \circ f_{L, P}^*) \right)^{\vartheta},
\,\,\,\,\,\,\,\,\, 
f \in \mathcal{F}, 
\end{align}
where $\mathcal{F}$ is a hypothesis set with $0 \in \mathcal{F}$. We define $r^*$ and $\mathcal{F}_r$ by \eqref{rstar} and \eqref{Fr}, respectively and assume that \eqref{coveringnumber} is satisfied. Finally, let $\Upsilon : \mathcal{F} \to [0, \infty)$ be a regularizer with $\Upsilon(0) = 0$, $f_0 \in \mathcal{F}$ be a fixed function, and $B_0 \geq 1$ be a constant such that $\| L \circ f_0 \|_{\infty} \leq B_0$. Then, for all fixed $\varepsilon > 0$, $\delta \geq 0$, $\tau \geq 1$, $n \geq n_0$, and $r \in (0, 1]$ satisfying
\begin{align}\label{minradius}
r \geq \max \left\{ 
\biggl( \frac{c_{\mathsmaller{V}} (\tau + \varphi(\varepsilon/2) 2^p r^p)}{\neff} \biggr)^{\frac{1}{2 - \vartheta}}, \frac{8 c_{\mathsmaller{B}} B_0 \tau}{\neff}, r^* \right\}
\end{align}
with $c_{\mathsmaller{V}} := 64 (4 c_{\sigma} V + c_{\mathsmaller{B}})$, every learning method defined by \eqref{deltaCRERM} satisfies with 
probability $\mu$ not less than $1 - 8 C e^{-\tau}$:
\begin{align} \label{oracleinequalityy}
\Upsilon(f_{D_n,\Upsilon}) + \mathcal{R}_{L,P}(\wideparen{f}_{D_n, \Upsilon}) - \mathcal{R}_{L,P}^*< 2 \Upsilon(f_0) + 4 \mathcal{R}_{L, P}(f_0) - 4 \mathcal{R}_{L, P}^* + 4 r + 5 \varepsilon + 2 \delta.
\end{align}
\end{thm}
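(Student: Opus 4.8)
\emph{Proof plan.} My plan is to follow the standard route for clipped regularized empirical risk minimization (as in \cite[Chapter 7]{StCh08a} and \cite{StCh09a}), the new ingredient being a peeling argument that localizes simultaneously in the excess risk and in the regularizer, with the classical Bernstein inequality replaced throughout by the generalized one of Assumption \ref{BernsteinInequalityGeneral}, so that $n$ is everywhere replaced by $\neff$. First I would fix notation: for $f \in \mathcal{F}$ put $g_f := L \circ \wideparen{f} - L \circ f_{L,P}^*$ and $A(f) := \Upsilon(f) + \mathcal{R}_{L,P}(\wideparen{f}\,) - \mathcal{R}_{L,P}^* = \Upsilon(f) + \mathbb{E}_P g_f$, so that $\mathcal{F}_r = \{ f \in \mathcal{F} : A(f) \le r\}$. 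Since $L$ can be clipped we have $\mathcal{R}_{L,P}(\wideparen{f}\,) \ge \mathcal{R}_{L,P}^*$, hence $\mathbb{E}_P g_f \ge 0$, $\|g_f\|_\infty \le 1$, and \eqref{variancebound} gives $\mathbb{E}_P g_f^2 \le V (\mathbb{E}_P g_f)^{\vartheta} \le V\,A(f)^{\vartheta}$; in particular $f_{L,P}^* \in \mathcal{F}$ is not needed. Starting from \eqref{deltaCRERM} with the test function $f_0$ and subtracting $\mathcal{R}_{L,D_n}(f_{L,P}^*)$ from both sides reduces the claim to (i) an \emph{upper} bound on $\mathcal{R}_{L,D_n}(f_0) - \mathcal{R}_{L,D_n}(f_{L,P}^*)$ in terms of $\mathcal{R}_{L,P}(f_0) - \mathcal{R}_{L,P}^*$, and (ii) a \emph{lower} bound on $\mathcal{R}_{L,D_n}(\wideparen{f}_{D_n,\Upsilon}) - \mathcal{R}_{L,D_n}(f_{L,P}^*) = \mathbb{E}_{D_n} g_{f_{D_n,\Upsilon}}$ in terms of $\mathbb{E}_P g_{f_{D_n,\Upsilon}}$.

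For (i) I would apply \eqref{BersteinGeneral} once to $L\circ f_0 - \mathcal{R}_{L,P}(f_0)$, which has sup-norm $\le B_0$ and variance $\le B_0\,\mathcal{R}_{L,P}(f_0)$ because $0 \le L\circ f_0 \le B_0$, and if necessary once to $L\circ f_{L,P}^*$; applying Young's inequality $\sqrt{ab}\le a+b/4$ to the resulting square-root term turns $\mathcal{R}_{L,P}(f_0)$ into a factor $2$ (whence the constant $4$ in the statement) plus an additive term of order $c_{\mathsmaller{B}} B_0 \tau / \neff$ that is swallowed by the requirement $r \ge 8 c_{\mathsmaller{B}} B_0 \tau / \neff$ in \eqref{minradius}. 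Step (ii) is the core of the argument. I would prove, with probability at least $1 - 8Ce^{-\tau}$, the uniform estimate
\[
\mathbb{E}_P g_f - \mathbb{E}_{D_n} g_f \le \tfrac12 A(f) + \tfrac12 r + \varepsilon \qquad \text{for all } f \in \mathcal{F},
\]
by \emph{peeling} over the value of $A(f)$: split $\mathcal{F}$ into $\{A(f) \le 2r\}$ and the shells $\{2^{j}r < A(f) \le 2^{j+1}r\}$, $j \ge 1$; on each shell every $f$ lies in $\mathcal{F}_{2^{j+1}r}$, so by \eqref{coveringnumber} it is $\varepsilon/2$-close in $\|\cdot\|_\infty$ to one of at most $\exp\!\big(\varphi(\varepsilon/2)(2^{j+1}r)^p\big)$ centres, and the local Lipschitz continuity of $L$ (hence $\|g_f-g_{f'}\|_\infty \le \|f-f'\|_\infty$) makes the discretization cost only an additive $\varepsilon$; to each centre $f'$ one applies \eqref{BersteinGeneral} to $h := g_{f'} - \mathbb{E}_P g_{f'}$ with $\sigma^2 = V(2^{j+1}r)^{\vartheta}$, $B=2$, and deviation of order $2^{j}r$. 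The level is chosen so that the covering exponent $\varphi(\varepsilon/2)(2^{j+1}r)^p$ is dominated by the Bernstein exponent minus $\tau+j\log 2$; because $2-\vartheta \ge 1 \ge p$ this holds for every $j \ge 0$ once it holds for $j=0$, and for $j=0$ it is exactly the constraint $r \ge \big(c_{\mathsmaller{V}}(\tau+\varphi(\varepsilon/2)2^p r^p)/\neff\big)^{1/(2-\vartheta)}$ with $c_{\mathsmaller{V}} = 64(4 c_\sigma V + c_{\mathsmaller{B}})$. A union bound over the centres within each shell together with the summation of $\sum_{j\ge 0} 2^{-j}$ yields the factor $8C$.

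Finally, inserting the bound from (ii) at $f = f_{D_n,\Upsilon}$ together with the bound from (i) into the reduced inequality, the term $\tfrac12 A(f_{D_n,\Upsilon})$ is absorbed on the left, and multiplying through by $2$ and collecting the error contributions gives \eqref{oracleinequalityy} (with $4r$, $5\varepsilon$, $2\delta$ appearing and the $B_0\tau/\neff$ terms absorbed into $r$ via \eqref{minradius}). I expect the main obstacle to be the bookkeeping in the peeling step: one must pick the per-shell deviation levels and the scale of the $\|\cdot\|_\infty$-net so that the \emph{single} condition \eqref{minradius} on $r$ simultaneously controls all shells, while keeping the total failure probability at $8Ce^{-\tau}$ and the accumulated approximation loss at $O(\varepsilon)$; the appearance of $\neff$ rather than $n$ plays no role beyond invoking Assumption \ref{BernsteinInequalityGeneral}.
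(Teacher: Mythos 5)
Your overall architecture is the same as the paper's: the main decomposition obtained by testing \eqref{deltaCRERM} against $f_0$, a one-shot Bernstein estimate for the $f_0$-term, and, for the uniform term, a peeling of the hypothesis class into shells $\mathcal{F}_{2^k r}\setminus\mathcal{F}_{2^{k-1}r}$ combined with localized $\varepsilon/2$-nets, the generalized Bernstein inequality applied per centre, a geometrically decaying union bound driven by condition \eqref{minradius}, and a Lipschitz transfer back from the net costing an additive $\varepsilon$. Your step (ii) is essentially the paper's argument.

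However, step (i) as you describe it would not deliver the stated conclusion. You propose to apply \eqref{BersteinGeneral} to $L\circ f_0 - \mathcal{R}_{L,P}(f_0)$ with the variance bound $\mathbb{E}_P(L\circ f_0)^2\le B_0\,\mathcal{R}_{L,P}(f_0)$ and then to use Young's inequality. This produces a deviation term proportional to the \emph{raw} risk $\mathcal{R}_{L,P}(f_0)$, not the excess risk $\mathcal{R}_{L,P}(f_0)-\mathcal{R}_{L,P}^*$; after subtracting the separately controlled term $\mathcal{R}_{L,D_n}(f^*_{L,P})\approx\mathcal{R}_{L,P}^*$ you are left with $2\mathcal{R}_{L,P}(f_0)-\mathcal{R}_{L,P}^* = 2\bigl(\mathcal{R}_{L,P}(f_0)-\mathcal{R}_{L,P}^*\bigr)+\mathcal{R}_{L,P}^*$, i.e.\ an uncancelled additive $\mathcal{R}_{L,P}^*$ that does not vanish and is absent from \eqref{oracleinequalityy}; whenever the Bayes risk is positive this destroys the oracle inequality. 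Centering at $L\circ f^*_{L,P}$ directly does not help either, because $L\circ f_0-L\circ f^*_{L,P}$ can be negative pointwise, so its second moment is not self-bounding, and the variance bound \eqref{variancebound} is only assumed for \emph{clipped} functions, while $f_0$ need not satisfy $\wideparen{f}_0=f_0$. The paper's remedy is to write $h_{f_0}=(h_{f_0}-h_{\wideparen{f}_0}) + h_{\wideparen{f}_0}$ with $h_f := L\circ f - L\circ f^*_{L,P}$: the first summand equals $L\circ f_0 - L\circ\wideparen{f}_0\in[0,B_0]$ pointwise by the clipping property, hence has second moment at most $B_0$ times its mean, and the second summand is covered by \eqref{variancebound}; the two means add up exactly to the excess risk $\mathbb{E}_P h_{f_0}$, and two applications of \eqref{BersteinGeneral} plus Young's inequality then give $\mathbb{E}_{D_n}h_{f_0}-\mathbb{E}_P h_{f_0}\le\mathbb{E}_P h_{f_0}+(c_\sigma V\tau/\neff)^{1/(2-\vartheta)}+O(c_{\mathsmaller{B}}B_0\tau/\neff)$, which is what yields the factor $4$ on the excess risk of $f_0$ after the final doubling. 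With this correction your plan coincides with the paper's proof.
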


The proof of Theorem \ref{oracleinequality} will be provided in the Appendix. Before we illustrate this oracle inequality   
in the next section with various examples, let us briefly discuss the variance bound \eqref{variancebound}. For example, if $Y = [- M, M]$ and $L$ is the least squares loss, then it is well-known that \eqref{variancebound} is satisfied for $V := 16 M^2$ and $\vartheta = 1$, see e.g. \cite[Example 7.3]{StCh08a}. Moreover, under some assumptions on the distribution $P$, \cite{StCh11a} established a variance bound of the form \eqref{variancebound} for the so-called pinball loss used for quantile regression. In addition, for the hinge loss, \eqref{variancebound} is satisfied for $\vartheta := q / (q + 1)$, if Tsybakov's noise assumption \cite[Proposition 1]{Tsybakov04a} holds for $q$, see \cite[Theorem 8.24]{StCh08a}. Finally, based on \cite{BlLuVa03a}, \cite{Steinwart09a} established a variance bound with 
$\vartheta = 1$ for the earlier mentioned clippable modifications of strictly convex, twice continuously differentiable margin-based loss 
functions. 

We remark that in  Theorem \ref{oracleinequality} the constant $B_0$ is necessary since the assumed boundedness of $L$ only guarantees $\|L \circ \wideparen{f}\|_{\infty}\leq 1$, while $B_0$ bounds the function $L \circ f_0$ for an {\em unclipped} $f_0 \in \mathcal{F}$. We do not assume that all $f \in \mathcal{F}$ satisfy $\wideparen{f} = f$, therefore in general $B_0$ is necessary. We refer to Examples \ref{LSSVMfixedkernels}, \ref{ex-smooth-kernesl} and \ref{LSSVMgaussiankernels} for situations, where $B_0$ is significantly larger than $1$.

\section{Applications to Statistical Learning}\label{applications}
To illustrate the oracle inequality developed in Section \ref{MainResults}, we now apply it to establish learning rates for some algorithms including ERM over finite sets and SVMs using either a given generic kernel or a Gaussian kernel with varying widths. In the ERM case, our results match those in the i.i.d.~case, if one replaces the number of observations $n$ with the effective number of observations $\neff$ while, for LS-SVMs with given generic kernels, our rates are slightly worse than the recently obtained optimal rates \cite{StHuSc09b} for i.i.d.~observations. The latter difference is not surprising when considering the fact that \cite{StHuSc09b} used heavy machinery from 
empirical process theory such as Talagrand's inequality and localized Rademacher averages while our results only use a light-weight argument based on the generalized Bernstein-type inequality and the peeling method. However, when using  Gaussian kernels, we indeed recover the optimal rates for LS-SVMs and SVMs for quantile regression with i.i.d.~observations.

Let us now present the first example, that is, the empirical risk minimization scheme over a finite hypothesis set. 

\begin{example}[ERM]\label{ERM_example} 
Let $\mathcal{Z}$ be a stochastic process satisfying Assumption \ref{BernsteinInequalityGeneral}
and the hypothesis set $\mathcal{F}$ be finite with $0 \in \mathcal{F}$ and $\Upsilon(f) = 0$ 
for all $f \in \mathcal{F}$. Moreover, assume that $\|f\|_{\infty} \leq M$ for all $f \in \mathcal{F}$. 
Then, for accuracy $\delta := 0$, the learning method described by \eqref{deltaCRERM}
is ERM, and Theorem \ref{oracleinequality} shows by some simple estimates that 
\begin{align*}
\mathcal{R}_{L,P} (f_{D_n, \Upsilon}) - \mathcal{R}_{L, P}^*
\leq 4 \inf_{f \in \mathcal{F}} \left( \mathcal{R}_{L,P}(f) - \mathcal{R}_{L,P}^* \right)+ 4 \left( \frac{ c_{\mathsmaller{V}} (\tau + \ln |\mathcal{F}|)}{\neff} \right)^{1 / (2 - \vartheta)} + \frac{32 c_{\mathsmaller{B}} \tau}{\neff}
\end{align*}
hold with probability $\mu$ not less than $1 - 8 C e^{-\tau}$. 
\end{example}

Recalling that for the i.i.d.~case we have $\neff = n$, therefore, in Example \ref{ERM_example} the oracle inequality \eqref{oracleinequalityy} is thus an exact analogue to  standard oracle inequality for ERM learning from i.i.d.~processes (see e.g.~\cite[Theorem 7.2]{StCh08a}), albeit with different constants.

For further examples let us begin by briefly recalling  SVMs \cite{StCh08a}. To this end, let $X$ be a measurable space, $Y := [-1, 1]$ and $k$ be a measurable (reproducing) kernel on $X$ with reproducing kernel Hilbert space (RKHS) $H$. Given a regularization parameter $\lambda > 0$ and a convex loss $L$, SVMs find the unique solution
\begin{align*}
f_{D_n, \lambda} = \argmin_{f \in H} \left( \lambda \| f \|_H^2 + \mathcal{R}_{L,D_n} (f) \right).
\end{align*}
In particular, SVMs using the least squares loss \eqref{lsloss} are called least squares SVMs (LS-SVMs) \cite{SuVaDeDeVa02} where a primal-dual characterization is given, and also termed as kernel ridge regression in the case of zero bias term as studied in \cite{StCh08a}. SVMs using the $\tau$-pinball loss \eqref{pbloss} are called SVMs for quantile regression. To describe the approximation properties of $H$, we further need the approximation error function
\begin{align}\label{alambda}
A(\lambda) := \inf_{f \in H} \left( \lambda \| f \|_H^2 + \mathcal{R}_{L,P}(f) - \mathcal{R}_{L, P}^* \right), \,\,\,\,\,\,\,\,\lambda > 0,
\end{align}
and denote $f_{P,\lambda}$ as the population version of $f_{D_n, \lambda}$, which is given by 
\begin{align}\label{f_0}
f_{P,\lambda} := \argmin_{f \in H} \left( \lambda \| f \|_H^2 + \mathcal{R}_{L,P}(f) - \mathcal{R}_{L, P}^* \right).
\end{align}
The next example discusses learning rates for LS-SVMs using a given generic kernel.

\begin{example}[Generic Kernels]   \label{LSSVMfixedkernels} 
Let $(X,\mathcal{X})$ be a measurable space, $Y = [-1, 1]$, and $\mathcal{Z}$ be a stochastic process satisfying Assumption \ref{BernsteinInequalityGeneral}. Furthermore, let $L$ be the least squares loss and $H$ be an RKHS over $X$ such that the closed unit ball $B_H$ of $H$ satisfies
\begin{align*}
\ln \mathcal{N} (B_H, \| \cdot \|_{\infty}, \varepsilon) 
\leq a \varepsilon^{- 2 p}, 
\,\,\,\,\,\,\,\,
\varepsilon > 0,
\end{align*}
for some constants  $p\in (0,1]$ and $a>0$. In addition, assume that the approximation error function satisfies $A(\lambda) \leq c \lambda^{\beta}$ for some $c > 0$, $\beta \in (0, 1]$, and all $\lambda > 0$. 

Recall that for SVMs we always have $f_{D_n, \lambda} \in \lambda^{-1/2} B_H$, see \cite[(5.4)]{StCh08a}. Consequently we only need to consider the hypothesis set $\mathcal{F} = \lambda^{-1/2} B_H$. Then, \eqref{Fr} implies that 
$\mathcal{F}_r \subset r^{1/2} \lambda^{-1/2} B_H$ and consequently we find
\begin{align}
\ln \mathcal{N}(\mathcal{F}_r, \|\cdot\|_{\infty}, \varepsilon) \leq a \lambda^{-p} \varepsilon^{-2p} r^p.
\end{align}
Thus, we can define the function $\varphi$ in \eqref{coveringnumber} as $\varphi(\varepsilon) := a \lambda^{-p} \varepsilon^{- 2 p}$. 
For the least squares loss, the variance bound \eqref{variancebound} is valid with $\vartheta = 1$, hence
the condition \eqref{minradius} is satisfied if
\begin{align}\label{minradius2}
r \geq \max \left\{ 
\left( c_{\mathsmaller{V}} 2^{1+3p} a \right)^{\frac{1}{1-p}} \lambda^{-\frac{p}{1-p}} \neff^{- \frac{1}{1-p}}       \varepsilon^{-\frac{2p}{1-p}},\frac{2 c_{\mathsmaller{V}} \tau}{\neff},\frac{8 c_{\mathsmaller{B}} B_0 \tau}{\neff},r^* \right\}.
\end{align}
Therefore, let $r$ be the sum of the terms on the right-hand side. Since for large $n$ the first and next-to-last term in \eqref{minradius2} dominate, the oracle inequality \eqref{oracleinequalityy} becomes
\begin{align*}
\lambda \|f_{D_n,\lambda}\|_H^2 + \mathcal{R}_{L,P} (\wideparen{f}_{D_n, \lambda}) - \mathcal{R}_{L,P}^*
& \leq 4 \lambda \|f_{P,\lambda}\|_H^2 + 4 \mathcal{R}_{L, P}(f_{P,\lambda}) - 4 \mathcal{R}_{L, P}^* + 4 r + 5 \varepsilon  
\\
& \leq C \Bigl( \lambda^{\beta} + \lambda^{-\frac{p}{1-p}} \neff^{- \frac{1}{1-p}} \varepsilon^{-\frac{2p}{1-p}}+ \lambda^{\beta-1} \neff^{-1} \tau + \varepsilon \Bigr) \, ,
\end{align*}
where $f_{P,\lambda}$ is defined in \eqref{f_0} and $C$ is a constant independent of $n$, $\lambda$, $\tau$, or $\varepsilon$. Now optimizing over $\varepsilon$, we then see by \cite[Lemma A.1.7]{StCh08a}  that the LS-SVM using $\lambda_n := \neff^{- \rho / \beta}$ learns with the rate $\neff^{- \rho}$, where
\begin{equation}\label{rho-gen}
\rho := \min \left\{ \beta, \frac{\beta}{\beta + p \beta + p} \right\}.
\end{equation}
\end{example}

In particular, for geometrically $\alpha$-mixing processes, we obtain the learning rate $n^{- \alpha \rho}$, where $\alpha := \frac{\gamma}{\gamma+1}$ and $\rho$ as in \eqref{rho-gen}. Let us compare this rate with the ones previously established for LS-SVMs in the literature. For example, \cite{StCh09a} proved a rate of the form 
\begin{displaymath}
   n^{-\alpha \min\{\beta, \frac{\beta}{\beta+2p\beta+p}\}}
\end{displaymath}
under exactly the same assumptions. 
Since $\beta>0$ and $p>0$, our rate is always better than that of \cite{StCh09a}. In addition, \cite{Feng12a} generalized the rates of \cite{StCh09a} to regularization terms of the form $\lambda\|\cdot\|_H^q$ with $q\in (0,2]$. The resulting rates are again always slower than the ones established in this work. For the standard regularization term, that is $q=2$, \cite{XuCh08a}  established the rate
\begin{align*}
n^{- \frac{\alpha \beta}{2 p + 1}},
\end{align*}
which is  always slower than ours, too.  Finally, in the case $p=1$,  \cite{SuWu09a} established the rate
\begin{align*}
 n^{- \frac{2 \alpha \beta}{\beta + 3}},
\end{align*}
which was subsequently improved to 
\begin{align*}
n^{- \frac{3 \alpha \beta}{2\beta + 4}}
\end{align*}
in \cite{SuWu10a}.
The latter rate is worse than ours, if and only if $(1+\beta)(1+3p)\leq 5$. In particular, for $p\in (0,1/2]$ we always get better rates.
Furthermore, the analysis of  \cite{SuWu09a,SuWu10a} is restricted to LS-SVMs, while our results hold for rather generic learning algorithms.

\begin{example}[Smooth Kernels]\label{ex-smooth-kernesl}
Let $X \subset \mathbb{R}^d$ be a compact subset, $Y = [-1, 1]$, and $\mathcal{Z}$ be a stochastic process satisfying Assumption \ref{BernsteinInequalityGeneral}. Furthermore, let $L$ be the least squares loss and  $H = W^m(X)$ be a Sobolev space with smoothness $m > d/2$. Then it is well-known, see e.g. \cite{StHuSc09b} or \cite[Theorem 6.26]{StCh08a}, that
\begin{align*}
\ln \mathcal{N} (B_H, \|\cdot\|_{\infty}, \varepsilon) \leq a \varepsilon^{- 2 p},
\,\,\,\,\,\,\,\,
\varepsilon > 0,
\end{align*}
where $p := \frac{d}{2m}$ and $a > 0$ is some constant. Let us additionally assume that  the marginal distribution $P_X$ is absolutely 
continuous with respect to the uniform distribution, where the corresponding density is bounded away from 0 and $\infty$. Then there exists a constant $C_p > 0$ such that
\begin{align*}
\|f\|_{\infty} \leq C_p \|f\|_H^p \|f\|_{L_2(P_X)}^{1-p}, 
\,\,\,\,\,\,\,\,
f \in H
\end{align*}
for the same $p$, see \cite{MeNe10a} and \cite[Corollary 3]{StHuSc09b}. Consequently, we can bound $B_0 \leq \lambda^{(\beta - 1)p}$ as in \cite{StHuSc09b}. Moreover, the assumption on the approximation error function is satisfied for $\beta := s/m$, whenever $f_{L,P}^* \in W^s(X)$ and $s \in (0, m]$. Therefore, the resulting learning rate is
\begin{align}\label{genratesmooth}
\neff^{- \frac{2 s}{2 s + d + d s / m}}\, .
\end{align}
\end{example}

Note that in the i.i.d.~case, where $\neff = n$, this rate is worse than the optimal rate $n^{- \frac{2 s}{2 s + d}}$, where the discrepancy is the term $ds/m$ in the denominator. However, this difference can be made arbitrarily small by picking a sufficiently large $m$, 
that is, a sufficiently smooth kernel $k$. Moreover, in this case, for geometrically $\alpha$-mixing processes, the rate \eqref{genratesmooth} becomes
\begin{align*}
n^{- \frac{2 s \alpha}{2 s + d + d s / m}}\, ,
\end{align*}
where $\alpha := \frac{\gamma}{\gamma+1}$. 
Comparing this rate with the one from \cite{SuWu10a}, it turns out that their rate is worse than ours, if $m \geq \frac{1}{16}(2s+3d+\sqrt{4s^2+108sd+9d^2})$. Note that by the constraint $s\leq m$, the latter is always satisfied for $m\geq d$.

In the following, we are mainly interested in the commonly used Gaussian kernels $k_{\sigma} : X \times X \to \mathbb{R}$ defined by
\begin{align*}
k_{\sigma} (x, x') := \exp \bigl( - \|x - x'\|_2^2/\sigma^2 \bigr), 
\,\,\,\,\,\,\,\,
x, x' \in X,
\end{align*}
where $X \subset \mathbb{R}^d$ is a nonempty subset and $\sigma > 0$ is a free parameter called the width. We write $H_{\sigma}$ for the corresponding RKHSs, which are described in some detail in \cite{StHuSc06a}. The entropy numbers for Gaussian kernels \cite[Theorem 6.27]{StCh08a} and the equivalence of covering and entropy numbers \cite[Lemma 6.21]{StCh08a} yield that
\begin{align}\label{covernumber3}
\ln \mathcal{N}(B_{H_\sigma},\|\cdot\|_{\infty},\varepsilon) \leq a \sigma^{-d} \varepsilon^{- 2 p},\,\,\,\,\,\,\,\,\varepsilon > 0,
\end{align}
for some constants $a > 0$ and $p \in (0, 1)$. Then \eqref{Fr} implies $\mathcal{F}_r \subset r^{1/2} \lambda^{-1/2} B_{H_\sigma}$
and consequently
\begin{align*}
\ln \mathcal{N}(\mathcal{F}_r,\|\cdot\|_{\infty},\varepsilon) \leq a \sigma^{-d} \lambda^{-p} \varepsilon^{-2p} r^p.
\end{align*}
Therefore, we can define the function $\varphi$ in Theorem \ref{oracleinequality} as
\begin{align}\label{varphi}
\varphi(\varepsilon) := a \sigma^{-d} \lambda^{-p} \varepsilon^{-2p}.
\end{align}
Moreover,  \cite[Section 2]{EbSt11a} shows that there exists a constant $c>0$ such that for all $\lambda>0$ and all $\sigma\in (0,1]$, there is an $f_0\in H_\sigma$ with $\|f_0\|_{\infty} \leq c$ and 
\begin{displaymath}
A(\lambda) \leq \lambda \|f_0\|_{H_\sigma}^2 +  \mathcal{R}_{L,P}(f_0) -  \mathcal{R}_{L,P}^*  \leq c \lambda \sigma^{-d} + c \sigma^{2t}\, .
\end{displaymath}

\begin{example}[Gaussian Kernels]\label{LSSVMgaussiankernels}
Let $Y := [- M, M]$ for some $M > 0$, $Z := \mathbb{R}^d \times Y$, $\mathcal{Z}$ be a stochastic process satisfying Assumption \ref{BernsteinInequalityGeneral}, and $P$ be a distribution on $Z$ whose marginal distribution on $\mathbb{R}^d$ is concentrated on $X \subset B_{\ell_2^d}$ and absolutely continuous w.r.t. the Lebesgue measure $\mu$ on $\mathbb{R}^d$. We denote the corresponding density $g : \mathbb{R}^d \to [0, \infty)$ and assume $\mu(\partial X) = 0$ and $g \in L_{q}(\mu)$ for some $q \geq 1$. Moreover, assume that the Bayes decision function $f^*_{L, P} = \mathbb{E}_P(Y | x)$ satisfies $f^*_{L, P} \in L_2(\mu) \cap L_{\infty}(\mu)$ as well as $f^*_{L, P} \in B_{2s, \infty}^t$ for some $t \geq 1$ and $s \geq 1$ with $\frac{1}{q} + \frac{1}{s} = 1$. Here, $B_{2s,\infty}^t$ denotes the Besov space with the smoothness parameter $t$, see also \cite[Section 2]{EbSt11a}. Recall that, for the least squares loss, the variance bound \eqref{variancebound} is valid with $\vartheta = 1$. 
Consequently, Condition \eqref{minradius} is satisfied if
\begin{align}\label{minradius3}
r \geq \max \left\{ 
\left( c_{\mathsmaller{V}} 2^{1+3p} a \right)^{\frac{1}{1-p}} \sigma^{- \frac{d}{1-p}} \lambda^{-\frac{p}{1-p}} \neff^{- \frac{1}{1-p}} \varepsilon^{-\frac{2p}{1-p}},\frac{2 c_{\mathsmaller{V}} \tau}{\neff},\frac{8 c_{\mathsmaller{B}} B_0 \tau}{\neff},r^* \right\}.
\end{align}
Note that in the right-hand side of \eqref{minradius3}, the first term dominates when $n$ goes large. In this context, the oracle inequality \eqref{oracleinequalityy}  becomes
\begin{align*}
\lambda \|f_{D_n,\lambda}\|_{H_\sigma}^2 + \mathcal{R}_{L,P}(\wideparen{f}_{D_n, \lambda}) - \mathcal{R}_{L, P}^*
 \leq C \Bigl( \lambda \sigma^{-d} +  \sigma^{2t} + \sigma^{- \frac{d}{1-p}} \lambda^{-\frac{p}{1-p}} \neff^{- \frac{1}{1-p}}           \varepsilon^{-\frac{2p}{1-p}} \tau + \varepsilon \Bigr).
\end{align*}
Here $C$ is a constant independent of $n$, $\lambda$, $\sigma$, $\tau$, or $\varepsilon$. Again, optimizing over $\varepsilon$ together with some standard techniques, see \cite[Lemma A.1.7]{StCh08a}, we then see that for all $\xi > 0$, the LS-SVM using Gaussian RKHS $H_{\sigma}$ and
\begin{align*}
\lambda_n = n_{\text{eff}}^{- 1} ~~~~ \textrm{and} ~~~~ \sigma_n = n_{\text{eff}}^{- \frac{1}{2 t + d}} \ ,    
\end{align*}
learns with the rate 
\begin{align}\label{lrlssvmgaussianGF}
n_{\text{eff}}^{- \frac{2 t}{2 t + d} + \xi}. 
\end{align}
\end{example}

In the i.i.d.~case we have $n_{\text{eff}} = n$, and hence the learning rate \eqref{lrlssvmgaussianGF} becomes
\begin{align}\label{iidGF}
n^{- \frac{2 t}{2 t + d} + \xi} \, .
\end{align}
Recall  that  modulo the arbitrarily small $\xi>0$ these learning rates are essentially optimal, see e.g. \cite[Theorem 13]{StHuSc09b} or \cite[Theorem 3.2]{GyKoKrWa02}. Moreover, for geometrically $\alpha$-mixing processes, the rate \eqref{lrlssvmgaussianGF} becomes
\begin{align*}
n^{- \frac{2 t}{2 t + d} \alpha + \xi} \, ,
\end{align*}
where $\alpha := \frac{\gamma}{\gamma+1}$. This rate is optimal up to the factor $\alpha$ and the additional $\xi$ in the exponent. Particularly, for restricted geometrically $\alpha$-mixing processes, geometrically $\alpha$-mixing Markov chains, $\phi$-mixing processes, we obtain the essentially optimal learning rates \eqref{iidGF}. Moreover, the same essentially optimal learning rates can be achieved for (time-reversed) geometrically $\mathcal{C}$-mixing processes, if we additionally assume $f_{L,P}^* \in \mathrm{Lip}(\mathbb{R}^d)$, see also \cite[Example 4.7]{HaSt15a}.

In the last example, we will briefly discuss learning rates for SVMs for quantile regression. For more information on such SVMs we refer to  \cite[Section 4]{EbSt11a}.

\begin{example}[Quantile Regression with Gaussian Kernels]\label{quantile_regression_Gaussian} 
Let $Y := [- 1, 1]$, $Z := \mathbb{R}^d \times Y$, $\mathcal{Z}$ be a stochastic process satisfying Assumption \ref{BernsteinInequalityGeneral}, $P$ be a distribution on $Z$, and $Q$ be the marginal distribution of $P$ on $\mathbb{R}^d$.
Assume that $X := \mathrm{supp} \, Q \subset B_{\ell_2^d}$ and that for $Q$-almost all $x \in X$, the conditional probability $P(\cdot|x)$ is absolutely continuous w.r.t.~the Lebesgue measure on $Y$ and the conditional densities $h(\cdot, x)$ of $P(\cdot|x)$ are uniformly bounded away from $0$ and $\infty$, see also \cite[Example 4.5]{EbSt11a}. Moreover, assume that $Q$ is absolutely continuous w.r.t.~the Lebesgue measure on $X$ with associated density $g \in L_{u}(X)$ for some $u \geq 1$. For $\tau \in (0, 1)$, let $f^*_{\tau, P} : \mathbb{R}^d \to \mathbb{R}$ be a conditional $\tau$-quantile function that satisfies $f^*_{\tau, P} \in L_2(\mu) \cap L_{\infty}(\mu)$.
In addition, we assume that  $f^*_{\tau, P} \in B_{2s, \infty}^t$ for some $t \geq 1$ and $s \geq 1$ such that $\frac{1}{s} + \frac{1}{u} = 1$. Then \cite[Theorem 2.8]{StCh11a} yields a variance bound of the form 
\begin{align}\label{Bernstein_quantile}
\mathbb{E}_{P} (L_{\tau} \circ \wideparen{f} - L_{\tau} \circ f^*_{\tau,P})^2 \leq V\cdot \mathbb{E}_P (L_{\tau} \circ \wideparen{f} -L_{\tau} \circ f^*_{\tau,P}) \, ,
\end{align}
for all $f:X\to \mathbb{R}$, 
where $V$ is a suitable constant and $L_\tau$ is the $\tau$-pinball loss. Then, following similar arguments with those in Example \ref{LSSVMgaussiankernels}, with the same choices of $\lambda_n$ and $\sigma_n$, the same rates can be obtained as in Example \ref{LSSVMgaussiankernels}. 
\end{example}

Here, we give two remarks on Example \ref{quantile_regression_Gaussian}. First, it is noted that the Bernstein condition \eqref{Bernstein_quantile} holds when the distribution $P$ is of a $\tau$-quantile of $p$-average type $q$ in the sense of Definition $2.6$ in \cite{StCh11a}. Two distributions of this type can be found in Examples $2.3$ and $2.4$ in \cite{christmann2007svms}. On the other hand, the rates obtained in Example  \ref{quantile_regression_Gaussian} are in fact for the excess $L_\tau$-risk. However, since \cite[Theorem 2.7]{StCh11a} shows 
\begin{displaymath}
\|\wideparen{f} - f_{\tau,P}^*\|_{L_2(P_X)}^2 
\leq c \bigl( \mathcal{R}_{L_\tau,P} (\wideparen{f}\,) - \mathcal{R}_{L_\tau,P}^* \bigr)
\end{displaymath}
for some constant $c > 0$ and all $f : X \to \mathbb{R}$, we also obtain the same rates for $\|\wideparen f - f_{\tau,P}^*\|_{L_2(P_X)}^2$. Last but not least, optimality for various mixing processes can be discussed along the lines of LS-SVMs.

\section{Conclusions}\label{Conclusion}
In the present paper, we proposed a unified learning theory approach to studying learning schemes sampling from various commonly 
investigated stationary mixing processes that include geometrically $\alpha$-mixing processes, geometrically $\alpha$-mixing Markov chains, $\phi$-mixing processes, and geometrically $\mathcal{C}$-mixing processes. The proposed approach is considered to be unified in the following sense: First, in our study, the empirical processes of the above-mentioned mixing processes were assumed to satisfy a generalized Bernstein-type inequality, which includes many commonly considered cases; Second, by instantiating the generalized Bernstein-type inequality to different scenarios, we illustrated the effective number of observations for different mixing processes; Third, based on the above generalized Bernstein-type concentration assumption, a generalized sharp oracle inequality was established within the statistical learning theory framework. Finally, faster or at least comparable learning rates can be obtained by applying the established oracle inequality to various learning schemes with different mixing processes.

\section*{Acknowledgement} 
\small{The authors would like to thank the editor and the reviewers for their insightful comments and helpful suggestions that improved the quality of this paper. The research leading to these results has received funding from the European Research Council under the European Union's Seventh Framework Programme (FP7/2007-2013) / ERC AdG A-DATADRIVE-B (290923). This paper reflects only the authors' views, the Union is not liable for any use that may be made of the contained information;  Research Council KUL: GOA/10/09 MaNet, CoE PFV/10/002 (OPTEC),  BIL12/11T; PhD/Postdoc grants; Flemish Government: FWO: PhD/Postdoc grants, projects: G.0377.12 (Structured systems), G.088114N (Tensor based data similarity); IWT: PhD/Postdoc grants, projects: SBO POM (100031); iMinds Medical Information Technologies SBO 2014; Belgian Federal Science Policy Office: IUAP P7/19 (DYSCO, Dynamical systems, control and optimization, 2012-2017). }

\appendix
\section*{Appendix}

\subsection*{Proof of Theorem \ref{oracleinequality} in Section \ref{MainResults}}
Since the proof of Theorem \ref{oracleinequality} is rather complicated, we first describe its main steps briefly: First we decompose the regularized excess risk into an approximation error term and two stochastic error terms. The approximation error and the first stochastic error term can be estimated by standard   techniques. Similarly, the first step in the estimation of the second error term is a rather standard quotient approach, see e.g.~\cite[Theorem 7.20]{StCh08a}, which allows for localization with respect to both the variance and the regularization. Due to the absence of tools from empirical process theory, however, the remaining estimation steps become more involved. To be more precise, we split the ``unit ball'' of the hypothesis space $\mathcal{F}$ into disjoint ``spheres''. For each sphere, we then use localized covering numbers and the generalized Bernstein-type inequality from Assumption \ref{BernsteinInequalityGeneral}, and  the resulting estimates   are then  combined using the peeling method. This yields a quasi-geometric series with  rate smaller than 1 if the radius of the innermost ball is sufficiently large. As a result, the estimated error probability on the whole ``unit ball'' nearly equals the estimated error probability of the innermost ``ball'', which unsurprisingly leads to  a significant improvement compared to \cite{StCh09a}. 
 
Before we prove Theorem \ref{oracleinequality}, we need to reformulate \eqref{BersteinGeneral}. Setting $\tau := \frac{\varepsilon^2 \neff}{c_{\sigma} \sigma^2 + \varepsilon c_{\mathsmaller{B}} B}$, with some simple transformations we obtain
\begin{align}\label{BersteinGeneral2}
\mu \left( \left\{ \omega \in \Omega : \frac{1}{n} \sum_{i=1}^n h(Z_i(\omega)) \geq \sqrt{\frac{\tau c_{\sigma} \sigma^2}{\neff}} + 
\frac{c_{\mathsmaller{B}} B \tau}{\neff} \right\} \right) \leq C e^{- \tau}   
\end{align}
for all $\tau > 0$ and $n \geq n_0$.

\begin{proof}[Proof of Theorem \ref{oracleinequality}]
\textbf{Main Decomposition.}
For $f : X \to \mathbb{R}$ we define $h_f := L \circ f - L \circ f_{L,P}^*$. 
By the definition of $f_{D_n,\Upsilon}$, we then have 
\begin{align*}
\Upsilon(f_{D_n,\Upsilon}) + \mathbb{E}_{D_n} h_{\wideparen{f}_{D_n, \Upsilon}} \leq \Upsilon (f_0) + \mathbb{E}_{D_n} h_{f_0} + \delta,
\end{align*} 
and consequently we obtain
\begin{align}\label{splitting}
\begin{split}
\Upsilon(f_{D_n,\Upsilon}) &  + \mathcal{R}_{L,P} (\wideparen{f}_{D_n,\Upsilon}) - \mathcal{R}_{L,P}^*= \Upsilon(f_{D_n,\Upsilon}) + \mathbb{E}_P h_{\wideparen{f}_{D_n,\Upsilon}}
\\
& \leq \Upsilon(f_0) + \mathbb{E}_{D_n} h_{f_0} - \mathbb{E}_{D_n} h_{\wideparen{f}_{D_n, \Upsilon}} 
       + \mathbb{E}_P h_{\wideparen{f}_{D_n,\Upsilon}} + \delta
\\
& = (\Upsilon(f_0) + \mathbb{E}_P h_{f_0}) + (\mathbb{E}_{D_n} h_{f_0} - \mathbb{E}_P h_{f_0}) + (\mathbb{E}_P h_{\wideparen{f}_{D_n,\Upsilon}} - \mathbb{E}_{D_n} h_{\wideparen{f}_{D_n, \Upsilon}}) + \delta.    
\end{split}
\end{align}

\textbf{Estimating the First Stochastic  Term.} 
Let us first bound the term $\mathbb{E}_{D_n} h_{f_0} - \mathbb{E}_P h_{f_0}$. To this end, we further split this difference into
\begin{align}\label{splitting2}
\mathbb{E}_{D_n} h_{f_0} - \mathbb{E}_P h_{f_0} = \left( \mathbb{E}_{D_n} ( h_{f_0} - h_{\wideparen{f}_0} ) - \mathbb{E}_P ( h_{f_0} - h_{\wideparen{f}_0} ) \right)  + ( \mathbb{E}_{D_n} h_{\wideparen{f}_0} - \mathbb{E}_P h_{\wideparen{f}_0} ). 
\end{align}
Now $L \circ f_0 - L \circ \wideparen{f}_0 \geq 0$ implies $h_{f_0} - h_{\wideparen{f}_0} = L \circ f_0 - L \circ \wideparen{f}_0 \in [0, B_0]$, and hence we obtain
\begin{align*}
\mathbb{E}_P \left( ( h_{f_0} - h_{\wideparen{f}_0} ) - \mathbb{E}_P ( h_{f_0} - h_{\wideparen{f}_0} ) \right)^2 \leq \mathbb{E}_P ( h_{f_0} - h_{\wideparen{f}_0} )^2 \leq B_0 \mathbb{E}_P ( h_{f_0} - h_{\wideparen{f}_0} ).
\end{align*}
Inequality \eqref{BersteinGeneral2} applied to $h := ( h_{f_0} - h_{\wideparen{f}_0} ) - \mathbb{E}_P ( h_{f_0} - h_{\wideparen{f}_0} )$ thus shows that
\begin{align*}
\mathbb{E}_{D_n} ( h_{f_0} - h_{\wideparen{f}_0} ) - \mathbb{E}_P ( h_{f_0} - h_{\wideparen{f}_0} ) \leq \sqrt{\frac{\tau c_{\sigma} B_0 \mathbb{E}_P ( h_{f_0} - h_{\wideparen{f}_0} )}{\neff}} + \frac{c_{\mathsmaller{B}} B_0 \tau}{\neff}
\end{align*}
holds with probability $\mu$ not less than $1 - C e^{-\tau}$. Moreover, using $\sqrt{a b} \leq \frac{a}{2} + \frac{b}{2}$, we find
\begin{align*}
\sqrt{\neff^{-1} \tau c_{\sigma} B_0 \mathbb{E}_P ( h_{f_0} - h_{\wideparen{f}_0} )} \leq \mathbb{E}_P ( h_{f_0} - h_{\wideparen{f}_0} ) + \neff^{-1} c_{\sigma} B_0 \tau / 4,
\end{align*}
and consequently we have with probability $\mu$ not less than $1 - C e^{-\tau}$ that
\begin{align}\label{splitting21estimate}
\mathbb{E}_{D_n} ( h_{f_0} - h_{\wideparen{f}_0} ) - \mathbb{E}_P ( h_{f_0} - h_{\wideparen{f}_0} ) \leq \mathbb{E}_P ( h_{f_0} - h_{\wideparen{f}_0} ) + \frac{7 c_{\mathsmaller{B}} B_0 \tau}{4 \neff}. 
\end{align}
In order to bound the remaining term in \eqref{splitting2}, that is $\mathbb{E}_{D_n} h_{\wideparen{f}_0} - \mathbb{E}_P h_{\wideparen{f}_0}$, we first observe that \eqref{lipschitz} implies $\|h_{\wideparen{f}_0}\|_{\infty} \leq 1$, and hence we have $\|h_{\wideparen{f}_0} - \mathbb{E}_P h_{\wideparen{f}_0}\|_{\infty} \leq 2$. Moreover, \eqref{variancebound} yields
\begin{align*}
\mathbb{E}_P ( h_{\wideparen{f}_0} - \mathbb{E}_P h_{\wideparen{f}_0} )^2 \leq \mathbb{E}_P h_{\wideparen{f}_0}^2 \leq V (\mathbb{E}_P h_{\wideparen{f}_0})^{\vartheta}.
\end{align*}
In addition, if $\vartheta \in (0, 1]$, the first inequality in \cite[Lemma 7.1]{StCh08a} implies for $q := \frac{2}{2 - \vartheta}$, $q' := \frac{2}{\vartheta}$, $a := (\neff^{-1} c_{\sigma} 2^{- \vartheta} \vartheta^{\vartheta} V \tau)^{1/2}$, and $b := (2 \vartheta^{-1} \mathbb{E}_P h_{\wideparen{f}_0})^{\vartheta/2}$, that
\begin{align*}
\sqrt{\frac{c_{\sigma} V \tau (\mathbb{E}_P h_{\wideparen{f}_0})^{\vartheta}}{\neff}} \leq \left( 1 - \frac{\vartheta}{2} \right) 
     \left( \frac{c_{\sigma} 2^{- \vartheta} \vartheta^{\vartheta} V \tau}{\neff} \right)^{\frac{1}{2 - \vartheta}} + \mathbb{E}_P h_{\wideparen{f}_0} 
\leq \left( \frac{c_{\sigma} V \tau}{\neff} \right)^{\frac{1}{2 - \vartheta}} + \mathbb{E}_P h_{\wideparen{f}_0}.
\end{align*}
Since $\mathbb{E}_P h_{\wideparen{f}_0} \geq 0$, this inequality also holds for $\vartheta = 0$, and hence \eqref{BersteinGeneral2} shows that 
we have
\begin{align*}
\mathbb{E}_{D_n} h_{\wideparen{f}_0} - \mathbb{E}_P h_{\wideparen{f}_0} \leq \mathbb{E}_P h_{\wideparen{f}_0} 
     + \left( \frac{c_{\sigma} V \tau}{\neff} \right)^{\frac{1}{2 - \vartheta}} + \frac{2 c_{\mathsmaller{B}} \tau}{\neff} 
\end{align*}
with probability $\mu$ not less than $1 - C e^{-\tau}$. By combining this estimate with \eqref{splitting21estimate} and \eqref{splitting2}, we now obtain that with probability $\mu$ not less than $1 - 2 C e^{-\tau}$ we have
\begin{align}\label{splitting2estimate}
\mathbb{E}_{D_n} h_{f_0} - \mathbb{E}_P h_{f_0} \leq \mathbb{E}_P h_{f_0} + \left( \frac{c_{\sigma} V \tau}{\neff} \right)^{\frac{1}{2 - \vartheta}} + \frac{2 c_{\mathsmaller{B}} \tau}{\neff} + \frac{7 c_{\mathsmaller{B}} B_0 \tau}{4 \neff}, 
\end{align}
since $1 \leq B_0$, i.e., we have established a bound on the second term in \eqref{splitting}. 

\textbf{Estimating the Second Stochastic  Term.}
For the third term in \eqref{splitting} let us first consider the case $\neff < c_{\mathsmaller{V}} (\tau + \varphi(\varepsilon/2) 2^p r^p)$. Combining \eqref{splitting2estimate} with \eqref{splitting} and using $1 \leq B_0$, $1 \leq V$, $c_{\sigma} V \leq c_{\mathsmaller{V}}$, $2 \leq 4^{1/(2-\vartheta)}$, and $\mathbb{E}_P h_{\wideparen{f}_{D_n, \Upsilon}} - \mathbb{E}_{D_n} h_{\wideparen{f}_{D_n, \Upsilon}} \leq 2$, then we find
\begin{align*}
\Upsilon(f_{D_n,\Upsilon}) + \mathcal{R}_{L,P} (\wideparen{f}_{D_n,\Upsilon}) - \mathcal{R}_{L,P}^*
& \leq \Upsilon(f_0) + 2 \mathbb{E}_P h_{f_0} 
       + \left( \frac{c_{\sigma} V \tau}{\neff} \right)^{\frac{1}{2 - \vartheta}} 
       + \frac{2 c_{\mathsmaller{B}} \tau}{\neff} + \frac{7 c_{\mathsmaller{B}} B_0 \tau}{4 \neff}
\\
& \phantom{=} 
       + (\mathbb{E}_P h_{\wideparen{f}_{D_n, \Upsilon}} - \mathbb{E}_{D_n} h_{\wideparen{f}_{D_n, \Upsilon}} ) 
       + \delta
\\
& \leq \Upsilon(f_0) + 2 \mathbb{E}_P h_{f_0} 
       + \left( \frac{c_{\sigma} V (\tau + \varphi(\varepsilon/2) 2^p r^p)}{\neff} 
                \right)^{\frac{1}{2 - \vartheta}} 
       + \frac{4 c_{\mathsmaller{B}} B_0 \tau}{\neff}
\\
& \phantom{=} 
       + 2 \left( \frac{c_{\mathsmaller{V}} (\tau + \varphi(\varepsilon/2) 2^p r^p)}{\neff} 
                  \right)^{\frac{1}{2 - \vartheta}} 
       + \delta
\\
& \leq 2 \Upsilon(f_0) + 4 \mathbb{E}_P h_{f_0} 
       + 3 \left( \frac{c_{\mathsmaller{V}} (\tau + \varphi(\varepsilon/2) 2^p r^p)}{\neff} \right)^{\frac{1}{2 - \vartheta}} 
       + \frac{8 c_{\mathsmaller{B}} B_0 \tau}{\neff} 
       + 2 \delta
\end{align*}
with probability $\mu$ not less than $1 - 2 C e^{-\tau}$. It thus remains to consider the case $\neff \geq c_{\mathsmaller{V}} (\tau + \varphi(\varepsilon/2) 2^p r^p)$. 

\textbf{Introduction of the Quotients.}
To establish a non-trivial bound on the term $\mathbb{E}_P h_{\wideparen{f}_D} - \mathbb{E}_{D_n} h_{\wideparen{f}_D}$ in \eqref{splitting}, we define functions
\begin{align*}
g_{f,r} := \frac{\mathbb{E}_P h_{\wideparen{f}} - h_{\wideparen{f}}}{\Upsilon(f) + \mathbb{E}_P h_{\wideparen{f}} + r}, 
\,\,\,\,\,\,\,\,f \in \mathcal{F}, \,\,r > r^*.
\end{align*}
For $f \in \mathcal{F}$, we have $\| \mathbb{E}_P h_{\wideparen{f}} - h_{\wideparen{f}} \|_{\infty} \leq 2$. Moreover, for $f \in \mathcal{F}_r$, the variance bound \eqref{variancebound} implies
\begin{align}\label{variancebound2} 
\mathbb{E}_P ( h_{\wideparen{f}} - \mathbb{E}_P h_{\wideparen{f}} )^2 \leq \mathbb{E}_P h_{\wideparen{f}}^2 \leq V (\mathbb{E}_P h_{\wideparen{f}})^{\vartheta} \leq V r^{\vartheta}. 
\end{align}

\textbf{Peeling.}
For a fixed $r \in (r^*, 1]$, let $K$ be the largest integer satisfying $2^K r \leq 1$. Then we can get the following disjoint partition of the function set $\mathcal{F}_1$:
\begin{align*}
\mathcal{F}_1 
\subset \mathcal{F}_r \cup \bigcup_{k=1}^{K+1} \left( \mathcal{F}_{2^k r} \backslash \mathcal{F}_{2^{k-1} r} \right).
\end{align*}
We further write $\overline{C}_{\varepsilon, r, 0}$ for a minimal $\varepsilon$-net of $\mathcal{F}_r$ and $\overline{C}_{\varepsilon, r, k}$ for minimal $\varepsilon$-nets of $\mathcal{F}_{2^k r} \backslash \mathcal{F}_{2^{k-1} r}$, $1 \leq k \leq K+1$, respectively. 
Then the union of these nets $\bigcup_{k=0}^{K+1} \overline{C}_{\varepsilon, r, k} =: \overline{C}_{\varepsilon, 1}$ is an $\varepsilon$-net of the set $\mathcal{F}_1$. Moreover, we define 
\begin{align*}
\widetilde{\mathcal{C}}_{\varepsilon, r, k} := \bigcup_{l=0}^k \overline{C}_{\varepsilon, r, l}, \,\,\,\,\,\,\,\,0 \leq k \leq K + 1,
\end{align*}
which are $\varepsilon$-nets of $\mathcal{F}_{2^k r}$ with $\widetilde{\mathcal{C}}_{\varepsilon, r, k} \subset \widetilde{\mathcal{C}}_{\varepsilon, r, k+1}$ for all $0 \leq k \leq K$, and the net $\widetilde{\mathcal{C}}_{\varepsilon, r, K+1}$
coincide with $\overline{C}_{\varepsilon, 1}$. For $A \subset B$ an elementary calculation shows that
\begin{align}\label{coveringsubset}
\mathcal{N} (A, \|\cdot\|_{\infty}, \varepsilon) \leq \mathcal{N} (B, \|\cdot\|_{\infty}, \varepsilon/2).   
\end{align}
By using \eqref{coveringsubset} for 
$\mathcal{F}_{2^k r} \backslash \mathcal{F}_{2^{k-1} r} \subset \mathcal{F}_{2^k r}$ we can estimate the cardinality of $\widetilde{\mathcal{C}}_{\varepsilon, r, k}$ by
\begin{align}\label{mcoveringnumber}
\begin{split}
|\widetilde{\mathcal{C}}_{\varepsilon, r, k}| 
& = \left| \bigcup_{l=0}^k \overline{C}_{\varepsilon, r, l} \right|
  \leq \sum_{l=0}^k |\overline{C}_{\varepsilon, r, l}|
  = \sum_{l=0}^k \mathcal{N}(\mathcal{F}_{2^k r} 
                 \backslash \mathcal{F}_{2^{k-1} r}, \|\cdot\|_{\infty}, \varepsilon)
\\
& \leq \sum_{l=0}^k \mathcal{N}(\mathcal{F}_{2^k r}, \|\cdot\|_{\infty}, \varepsilon/2) 
  \leq \sum_{l=0}^k \exp \left( \varphi(\varepsilon/2) (2^l r)^p \right)
\\
& \leq (k+1) \exp \left( \varphi(\varepsilon/2) 2^{k p} r^p \right), 
       \,\,\,\,\,\,\,\,
       0 \leq k \leq K + 1.
       \end{split}        
\end{align}
Using the peeling technique in \cite[Theorem 5.2]{HaSt14a} with $\mathcal{Z}_f := \mathbb{E}_{D_n} (\mathbb{E}_P h_{\wideparen{f}} - h_{\wideparen{f}})$, $\Gamma(f) := \Upsilon(f) + \mathbb{E}_P h_{\wideparen{f}}$, 
\begin{align*}
m_k := 
\begin{cases}
r^*       & \text{ for } k = 0, \\
2^{k-1} r & \text{ for } 1 \leq k \leq K, \\
1         & \text{ for } k = K + 1, 
\end{cases}
\end{align*}
and choosing $\epsilon = 1/4 $, we get
\begin{align*}
\mu \biggl( \sup_{f \in \overline{\mathcal{C}}_{\varepsilon, 1}} \mathbb{E}_{D_n} g_{f,r} 
            > \frac{1}{4} \biggr)
& = \mu \biggl( \sup_{f \in \overline{\mathcal{C}}_{\varepsilon, 1}} 
        \frac{\mathbb{E}_{D_n} (\mathbb{E}_P h_{\wideparen{f}} - h_{\wideparen{f}})}
             {\Upsilon(f) + \mathbb{E}_P h_{\wideparen{f}} + r} > \frac{1}{4} \biggr)
\\
& \leq \sum_{k=1}^{K+2} 
       \mu \biggl( \sup_{f \in \overline{\mathcal{C}}_{\varepsilon, r, k}} 
                   \mathbb{E}_{D_n} (\mathbb{E}_P h_{\wideparen{f}} - h_{\wideparen{f}}) 
                   > (2^{k-1} r + r)/4 \biggr)
\\
& \leq \mu \biggl( \sup_{f \in \overline{\mathcal{C}}_{\varepsilon, r, 0}} 
                   \mathbb{E}_{D_n} (\mathbb{E}_P h_{\wideparen{f}} - h_{\wideparen{f}}) 
                   > (r^* + r)/4 \biggr)
\\
& \phantom{=} 
       + \sum_{k=1}^{K+1} 
         \mu \biggl( \sup_{f \in \overline{\mathcal{C}}_{\varepsilon, r, k}} 
                     \mathbb{E}_{D_n} (\mathbb{E}_P h_{\wideparen{f}} - h_{\wideparen{f}}) 
                     > (2^{k-1}r + r)/4 \biggr)
\\
& \leq \mu \biggl( \sup_{f \in \widetilde{\mathcal{C}}_{\varepsilon, r, 1}} 
                   \mathbb{E}_{D_n} (\mathbb{E}_P h_{\wideparen{f}} - h_{\wideparen{f}}) 
                   > r/4 \biggr)
\\
& \phantom{=} 
       + \sum_{k=1}^{K+1} 
         \mu \biggl( \sup_{f \in \widetilde{\mathcal{C}}_{\varepsilon, r, k}} 
                     \mathbb{E}_{D_n} (\mathbb{E}_P h_{\wideparen{f}} - h_{\wideparen{f}}) 
                     > 2^{k-1} r/4 \biggr)
\\
& \leq 2 \sum_{k=1}^{K+1} 
         \mu \biggl( \sup_{f \in \widetilde{\mathcal{C}}_{\varepsilon, r, k}} 
                     \mathbb{E}_{D_n} (\mathbb{E}_P h_{\wideparen{f}} - h_{\wideparen{f}}) 
                     > 2^{k-3} r \biggr).
\end{align*}

\textbf{Estimating the Error Probabilities on the ``Spheres''.}
Our next goal is to estimate all the error probabilities by using \eqref{BersteinGeneral}, \eqref{variancebound2} and the union bound. 
From $\vartheta \in [0,1]$ and the estimations of the covering numbers \eqref{mcoveringnumber} follows that
\begin{align*}
& \mu \biggl( \sup_{f \in \widetilde{\mathcal{C}}_{\varepsilon, r, k}} 
              \mathbb{E}_{D_n} (\mathbb{E}_P h_{\wideparen{f}} - h_{\wideparen{f}}) 
              > 2^{k-3} r \biggr)
\\
& \leq C |\widetilde{\mathcal{C}}_{\varepsilon, r, k}| 
       \exp \biggl( - \frac{(2^{k-3} r)^2 \neff}{c_{\sigma} V (2^k r)^{\vartheta} + 2 c_{\mathsmaller{B}} (2^{k-3} r)} \biggr)
\\
& \leq C \cdot (k+1) \exp \left( \varphi(\varepsilon/2) 2^{k p} r^p \right) 
         \cdot \exp \biggl( - \frac{(2^{k-1} r)^2 \neff}
                                   {32 c_{\sigma} V (2^{k-1} r)^{\vartheta} + 8 c_{\mathsmaller{B}} (2^{k-1} r)} \biggr).
\end{align*}
For $k \geq 1$, we denote the right-hand side of this estimate by $p_k(r)$, that is
\begin{align*}
p_k(r) 
:= C \cdot (k+1) \exp \left( \varphi(\varepsilon/2) 2^{k p} r^p \right) 
     \cdot \exp \biggl( - \frac{(2^{k-1} r)^2 \neff}
                               {32 c_{\sigma} V (2^{k-1} r)^{\vartheta} + 8 c_{\mathsmaller{B}} (2^{k-1} r)} \biggr).
\end{align*}
Then we have
\begin{align*}
q_k(r) 
:= \frac{p_{k+1}(r)}{p_k(r)}
& \leq \frac{k+2}{k+1} 
       \cdot \exp \left( \varphi(\varepsilon/2) (2^{k+1} r)^p - \varphi(\varepsilon/2) (2^k r)^p \right) 
\\
& \phantom{=}
       \cdot \exp \biggl( - \frac{2^2 (2^{k-1} r)^2 \neff}
                           {32 c_{\sigma} V \cdot 2 (2^{k-1} r)^{\vartheta} + 8 c_{\mathsmaller{B}} \cdot 2 (2^{k-1} r)} 
                         + \frac{(2^{k-1} r)^2 \neff}
                           {32 c_{\sigma} V (2^{k-1} r)^{\vartheta} + 8 c_{\mathsmaller{B}} (2^{k-1} r)} \biggr)
\\
& \leq 2 \exp \left( \varphi(\varepsilon/2) 2^{k p + 1} r^p \right) 
         \cdot \exp \biggl( - \frac{(2^{k-1} r)^2 \neff}
                              {32 c_{\sigma} V (2^{k-1} r)^{\vartheta} + 8 c_{\mathsmaller{B}} (2^{k-1} r)} \biggr),
\end{align*}
and our assumption $2^k r \leq 1$, $0 \leq k \leq K$ implies
\begin{align*}
q_k(r) 
& \leq 2 \exp \left( \varphi(\varepsilon/2) 2^{kp+1} r^p \right)\cdot \exp \biggl( - \frac{(2^{k-1} r)^2 \neff}                             {32 c_{\sigma} V (2^{k-1} r)^{\vartheta} + 8 c_{\mathsmaller{B}} (2^{k-1} r)} \biggr)
\\
& \leq 2 \exp \biggl( 2^{(k-1)p} \cdot 4 r^p \varphi(\varepsilon/2) 
         - 2^{(k - 1) (2 - \vartheta)} \cdot \frac{r^{2 - \vartheta} \neff}{32 c_{\sigma} V + 8 c_{\mathsmaller{B}}} \biggr).
\end{align*}
Since $p \in (0, 1]$, $k \geq 1$ and $\vartheta, \in [0, 1]$, we have
\begin{align*}
2^{(k-1)p} \leq 2^{(k - 1) (2 - \vartheta)}.
\end{align*}
The first assumption in \eqref{minradius} implies that $r \geq \bigl( 64 (4 c_{\sigma} V + c_{\mathsmaller{B}}) \varphi(\varepsilon/2) r^p / \neff \bigr)^{1/(2-\vartheta)}$ or equivalently that 
\begin{align*}
4 r^p \varphi(\varepsilon/2) 
\leq \frac{1}{2} \cdot \frac{r^{2-\vartheta} \neff}{32 c_{\sigma} V + 8 c_{\mathsmaller{B}}},
\end{align*}
thus, using $2^{(k - 1) (2 - \vartheta)} \geq 1$, we find
\begin{align*}
q_k(r) \leq 2 \exp \biggl( - \frac{1}{2} \cdot \frac{r^{2-\vartheta} \neff}{32 c_{\sigma} V + 8 c_{\mathsmaller{B}}} \biggr). 
\end{align*}
Moreover, since $\tau \geq 1$, the first assumption in \eqref{minradius} implies also
$r \geq \bigl( 64 (4 c_{\sigma} V + c_{\mathsmaller{B}}) / \neff \bigr)^{1/(2-\vartheta)}$ or equivalently that
\begin{align*}
\frac{1}{2} \cdot \frac{r^{2-\vartheta} \neff}{32 c_{\sigma} V + 8 c_{\mathsmaller{B}}}  \geq 4,
\end{align*}
and hence $q_k(r) \leq 2 e^{-4}$, that is, $p_{k+1}(r) \leq 2 e^{-4} p_k(r)$ for all $k \geq 1$. 

\textbf{Summing all the Error Probabilities.}
From the above discussion we have
\begin{align*}
\mu \biggl( \sup_{f \in \overline{\mathcal C}_{\varepsilon, 1}} \mathbb{E}_{D_n} g_{f,r} > \frac{1}{4} \biggr) 
& \leq 2 \sum_{k=1}^{K+1} p_k(r) 
  \leq 2 \cdot p_1(r) \cdot \sum_{k=0}^K (2 e^{-4})^k 
  \leq 3 p_1(r)
\\
& = 6 \, C \exp \left( \varphi(\varepsilon/2) 2^p r^p \right) 
       \cdot \exp \biggl( - \frac{r^2 \neff}{32 c_{\sigma} V r^{\vartheta} + 8 c_{\mathsmaller{B}} r} \biggr)
\\
& \leq 6 \, C \exp \left( \varphi(\varepsilon/2) 2^p r^p \right) 
       \cdot \exp \biggl( - \frac{r^2 \neff}{32 c_{\sigma} V r^{\vartheta} + 8 c_{\mathsmaller{B}} r^{\vartheta}} \biggr)
\\
& \leq 6 \, C \exp \left( \varphi(\varepsilon/2) 2^p r^p \right) 
       \cdot  \exp \left( - \frac{r^{2-\vartheta} \neff}{32 c_{\sigma} V + 8 c_{\mathsmaller{B}}}  \right).
\end{align*}
Then once again the first assumption in \eqref{minradius} gives
\begin{align*}
r \geq \biggl( \frac{(32 c_{\sigma} V + 8 c_{\mathsmaller{B}}) (\tau + \varphi(\varepsilon/2) 2^p r^p)}{\neff} 
               \biggr)^{\frac{1}{2 - \vartheta}}
\end{align*}
and a simple transformation thus yields
\begin{align*}
\mu \left( D_n \in (X \times Y)^n : \sup_{f \in \overline{\mathcal C}_{\varepsilon, 1}} \mathbb{E}_{D_n} g_{f,r} \leq \frac{1}{4} \right) 
\geq 1 - 6 C e^{-\tau}.
\end{align*}
Consequently we see that with probability $\mu$ not less than $1 - 6 C e^{-\tau}$ we have
\begin{align}\label{splitting31}
\mathbb{E}_P h_{\wideparen{f}} - \mathbb{E}_{D_n} h_{\wideparen{f}} \leq \frac{1}{4} \left( \Upsilon(f) + \mathbb{E}_P h_{\wideparen{f}} + r \right)
\end{align}
for all $f \in \overline{\mathcal{C}}_{\varepsilon, 1}$. 
Since $r \in (0, 1]$, we have $f_{D_n, \Upsilon} \in \mathcal{F}_1$, i.e. either $f_{D_n, \Upsilon} \in \mathcal{F}_r$, or there exists an integer $k \leq K+1$ such that $f_{D_n, \Upsilon} \in \mathcal{F}_{2^k r} \backslash \mathcal{F}_{2^{k-1} r}$. Thus there exists an $f_{D_n} \in \overline{\mathcal{C}}_{\varepsilon, r, 0} \subset \mathcal{F}_r$ or $f_{D_n} \in \overline{\mathcal{C}}_{\varepsilon, r, k} \subset \mathcal{F}_{2^k r} \backslash \mathcal{F}_{2^{k-1} r}$ with $\| f_{D_n, \Upsilon} - f_{D_n} \|_{\infty} \leq \varepsilon$. By the assumed Lipschitz continuity of the clipped $L$ the latter implies
\begin{align}\label{varepsilon}
|h_{\wideparen{f}_{D_n}}(x, y) - h_{\wideparen{f}_{D_n, \Upsilon}}(x, y)| \leq |\wideparen{f}_{D_n}(x) - \wideparen{f}_{D_n, \Upsilon}(x)| 
\leq |f_{D_n}(x) - f_{D_n, \Upsilon}(x)| \leq \varepsilon      
\end{align}
for all $(x, y) \in X \times Y$. For $f_{D_n, \Upsilon}, f_{D_n} \in \mathcal{F}_r$ we obviously have
\begin{align*} 
\Upsilon(f_{D_n}) + \mathbb{E}_P h_{\wideparen{f}_{D_n}} \leq r
\end{align*}
and for the other cases $f_{D_n, \Upsilon}, f_{D_n} \in \mathcal{F}_{2^k r} \backslash \mathcal{F}_{2^{k-1} r}$ we obtain
\begin{align*} 
\Upsilon(f_{D_n}) + \mathbb{E}_P h_{\wideparen{f}_{D_n}} 
\leq 2^k r = 2 \cdot 2^{k-1} r 
\leq 2 \left( \Upsilon(f_{D_n, \Upsilon}) + \mathbb{E}_P h_{\wideparen{f}_{D_n, \Upsilon}} \right), 
\end{align*}
consequently, we always have
\begin{align}\label{Upsilon} 
\Upsilon(f_{D_n}) + \mathbb{E}_P h_{\wideparen{f}_{D_n}} \leq 2 \left( \Upsilon(f_{D_n, \Upsilon}) + \mathbb{E}_P h_{\wideparen{f}_{D_n, \Upsilon}}
\right) + r. 
\end{align}
Combining \eqref{varepsilon} with \eqref{splitting31} and \eqref{Upsilon}, we obtain
\begin{align*}
\mathbb{E}_P h_{\wideparen{f}_{D_n, \Upsilon}} - \mathbb{E}_{D_n} h_{\wideparen{f}_{D_n, \Upsilon}} \leq 
\frac{1}{2} \left( \Upsilon(f_{D_n, \Upsilon}) + \mathbb{E}_P h_{\wideparen{f}_{D_n, \Upsilon}} + \varepsilon 
+ r \right) + 2 \varepsilon
\end{align*}
with probability $\mu$ not less than $1 - 6 C e^{-\tau}$. By combining this estimate with \eqref{splitting} and \eqref{splitting2estimate}, we then obtain that
\begin{align*}
\Upsilon(f_{D_n, \Upsilon}) + \mathbb{E}_P h_{\wideparen{f}_{D_n, \Upsilon}}
& \leq \Upsilon(f_0) + 2 \mathbb{E}_P h_{f_0} + \left( \frac{c_{\sigma} V \tau}{\neff} \right)^{\frac{1}{2 - \vartheta}} 
+ \frac{2 c_{\mathsmaller{B}} \tau}{\neff} + \frac{7 c_{\mathsmaller{B}} B_0 \tau}{4 \neff} + \delta
\nonumber\\
& \phantom{=} + \frac{\Upsilon(f_{D_n, \Upsilon}) + \mathbb{E}_P h_{\wideparen{f}_{D_n, \Upsilon}}}{2} + \frac{5}{2} \varepsilon + \frac{1}{2} r
\nonumber
\end{align*}
holds with probability $\mu$ not less than $1 - 8 C e^{-\tau}$. From the assumptions in \eqref{minradius} follows that
\begin{align*}
\Upsilon(f_{D_n, \Upsilon}) + \mathbb{E}_P h_{\wideparen{f}_{D_n, \Upsilon}}
& \leq \Upsilon(f_0) + 2 \mathbb{E}_P h_{f_0} + 2 r + \delta +
\frac{\Upsilon(f_{D_n, \Upsilon}) + \mathbb{E}_P h_{\wideparen{f}_{D_n, \Upsilon}}}{2} + \frac{5 \varepsilon}{2} 
\end{align*}
holds with probability $\mu$ not less than $1 - 8 C e^{-\tau}$. Consequently, we have
\begin{align*}
\Upsilon(f_{D_n, \Upsilon}) + \mathbb{E}_P h_{\wideparen{f}_{D_n, \Upsilon}} 
\leq 2 \Upsilon(f_0) + 4 \mathbb{E}_P h_{f_0} + 4 r + 5 \varepsilon + 2 \delta.
\nonumber
\end{align*}
Therefore, we have proved the assertion.  
\end{proof} 
 
\bibliographystyle{plain}
\bibliography{FENGBib}
\end{document}